\documentclass{article}

\PassOptionsToPackage{numbers,compress}{natbib}

\usepackage[utf8]{inputenc}
\usepackage[T1]{fontenc}

\usepackage[preprint]{neurips_2026}

\usepackage{microtype}
\usepackage{graphicx}
\usepackage{subcaption}
\usepackage{booktabs}
\usepackage{hyperref}
\usepackage{float}
\usepackage{url}
\usepackage{nicefrac}
\usepackage{amsmath}
\usepackage{amssymb}
\usepackage{amsfonts}
\usepackage{mathtools}
\usepackage{amsthm}
\usepackage{tcolorbox}
\usepackage[dvipsnames]{xcolor}
\usepackage[capitalize,noabbrev]{cleveref}
\usepackage{wrapfig}

\hypersetup{
    colorlinks=true,
    allcolors=BrickRed
}

\theoremstyle{plain}
\newtheorem{theorem}{Theorem}[section]
\newtheorem{proposition}[theorem]{Proposition}
\newtheorem{lemma}[theorem]{Lemma}

\theoremstyle{definition}
\newtheorem{definition}[theorem]{Definition}
\newtheorem{assumption}[theorem]{Assumption}
\theoremstyle{remark}

\definecolor{main-bg}{RGB}{240, 246, 255}
\definecolor{main-frame}{RGB}{74, 111, 165}
\definecolor{mygreen}{HTML}{008b5f}

\title{Test-Time Personalization: A Diagnostic Framework and Probabilistic Fix for Scaling Failures}

\author{%
  Linhai Zhang \\
  King's College London\\
  \texttt{linhai.zhang@kcl.ac.uk}
  \And
  Yulan He\thanks{Corresponding author.} \\
  King's College London \\
  The Alan Turing Institute \\
  \texttt{yulan.he@kcl.ac.uk}
}

\begin{document}

\maketitle

\begin{abstract}
Existing approaches to LLM personalization focus on constructing better personalized models or inputs, while treating inference as a single-shot process.
In this work, we study \textbf{Test-Time Personalization (TTP)} along an unexplored axis: scaling inference-time computation by sampling $N$ candidates from a personalized policy model and selecting the best with a personalized reward model.
%We develop a theoretical framework, validated empirically, that characterizes when and why TTP succeeds or fails.
We prove that oracle selection yields expected utility growing logarithmically with the number of sampled candidates, establishing a theoretical ceiling for test-time scaling.
However, standard reward models fail to realize this potential. To diagnose why, we derive a unified scaling law that decomposes any reward model's Best-of-$N$ curve into four measurable quantities and reveals two failure modes, \emph{user-level collapse} (near-constant prediction for some users) and \emph{query-level reward hacking} (negative correlation with true quality for some queries).
Guided by this law, we propose a probabilistic personalized reward model whose learned variance  effectively mitigates both failure modes.
Experiments confirm both elements of our framework: TTP delivers consistent scaling across multiple policy models and personalized text generation tasks, and our scaling law closely matches observed scaling curves across reward-model variants.
\end{abstract}

%---- BEGIN sections/1_introduction.tex ----
\section{Introduction}

Large language models excel across diverse tasks, yet they predominantly produce \emph{one-size-fits-all} responses that ignore individual user preferences~\cite{liu2025survey}.
This limitation has motivated growing research interest in LLM personalization.
Existing approaches fall into three categories.
\emph{Personalized Prompting} augments input context with retrieved user history~\cite{richardson2023integrating,salemi-etal-2024-lamp}. 
\emph{Personalized Adaptation} directly fine-tunes model parameters on user data~\cite{tan-etal-2024-democratizing,zhang-etal-2025-proper}. 
\emph{Personalized Alignment} combines multi-objective reward models with user-specific weights~\cite{jang2024personalized,aroca-ouellette2025aligning}.
Despite their differences, these approaches share a common paradigm: they focus on constructing better personalized models or inputs, while treating inference as a single-shot generation process.

Meanwhile, scaling test-time computation has emerged as a powerful axis for improving LLM performance, particularly in reasoning tasks~\cite{wu2025inference, snell2025scaling, shen2025thinking}.
Recent work has begun to combine test-time scaling with personalization, but only along two axes: \emph{scaling user interactions} to learn each user's preferences online~\cite{qu2025tpop}, and \emph{scaling reward-model reasoning} via generative reward models~\cite{zhang2026pgenrm}.
A third, equally natural axis, \emph{scaling the policy model itself}, has not been studied for personalization.
This raises a natural question: \textbf{\emph{Can we improve personalization by scaling test-time computation for a weak personalized model?}}

A straightforward instantiation of test-time personalization is parallel sampling~\cite{salemi-etal-2024-lamp, kumar2024longlamp}: a personalized policy generates $N$ candidates per query, and a personalized reward model selects the best.
We first establish the theoretical promise of this approach: when the number of candidates increases, the expected utility of the optimal sample grows logarithmically (Section~\ref{sec:theory_1}). 
This provides a theoretical ceiling for what test-time personalization can achieve.
Realizing this ceiling in practice, however, is far from trivial.
As Figure~\ref{fig:intro} shows, both a global reward model trained on pooled data and a per-user reward model trained on individual histories perform near random selection across $N$, leaving the oracle gap wide open.
\textbf{\emph{What prevents standard reward models from scaling?}}

\begin{wrapfigure}{r}{0.38\textwidth}
    \vspace{-10pt}
    \centering
    \includegraphics[width=0.36\textwidth]{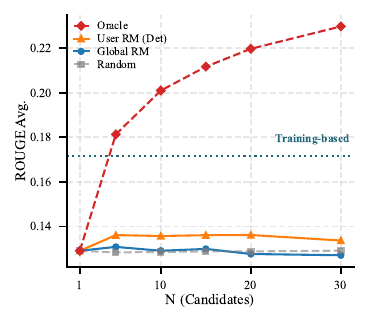}
    \caption{Test-time personalization on LaMP-4: News Headline Generation. Oracle shows logarithmic scaling that surpasses training-based baselines, while standard Reward Models (RMs) fail to scale, performing close to, or worse than, random selection.}
    \label{fig:intro}
    \vspace{-12pt}
\end{wrapfigure}

To diagnose this gap, we develop an analytical framework that connects a reward model's correlation with the golden score to its Best-of-$N$ scaling behavior (Section~\ref{sec:theory_2}).
The analysis surfaces two distinct failure patterns, \emph{user-level collapse} where the reward model degenerates to near-constant predictions for some users and \emph{query-level reward hacking} where the reward model's predictions negatively correlate with quality for some queries. 
We then generalize the oracle scaling law into a \emph{general expression} that forecasts the scaling curve of any reward model from four measurable quantities.
Guided by this expression, we propose a \textbf{probabilistic personalized reward model} that mitigates both failure patterns via learned variance, enabling stable test-time scaling in practice (Section~\ref{sec:theory_3}).

Experiments on two benchmarks covering five personalized text generation tasks confirm both elements of the framework: our probabilistic reward model scales reliably and surpasses training-based baselines on most tasks, and the general expression closely matches observed scaling curves. In summary, our contributions are three-fold:

\begin{itemize}
    \item We introduce Test-Time Personalization (TTP), a new paradigm focused on scaling policy-model compute for personalized text generation.
    \item We detect two failure patterns and a predictive scaling law, a general expression that forecasts any reward model's scaling curve from four measurable quantities.
    \item We propose a probabilistic personalized reward model that mitigates both failure patterns and scales reliably across multiple policy types and tasks.
\end{itemize}

%---- END sections/1_introduction.tex ----

%---- BEGIN sections/2_Preliminaries.tex ----
\section{Preliminaries}
\label{sec:preliminaries}

We focus on \emph{personalized text generation} tasks, where each user has demonstrated stylistic preferences through written examples and the system must mimic these preferences at inference time.

\subsection{Problem Formulation}

Consider a set of users $\mathcal{U}$, where each user $u \in \mathcal{U}$ has an underlying preference function $r_u^*: \mathcal{Q} \times \mathcal{X} \to \mathbb{R}$ that maps a query-response pair $(q, x)$ to a scalar reward reflecting how well $x$ aligns with the user's preferences for query $q$. Each user is associated with historical data $\mathcal{D}_u = \{(q_i, x_i)\}_{i=1}^{n_u}$.

Given a query $q$, a personalized policy $\pi_u$ conditioned on user history generates $N$ candidate responses: $\{x_1, x_2, \ldots, x_N\} \sim \pi_u(\cdot \mid q)$.
A personalized reward model $\hat{r}_u$ trained on $\mathcal{D}_u$ selects the best candidate: $x^* = \arg\max_{x_i} \hat{r}_u(q, x_i)$.
The effectiveness of TTP is measured by the expected true reward of the selected response:
\begin{equation}
    U(N) = \mathbb{E}_{q,\, x_{1:N} \sim \pi_u}\left[r_u^*(q, x^*)\right].
\end{equation}
The \emph{oracle} strategy selects candidates using the true preference function $r_u^*$, yielding the optimal utility $U_{\text{oracle}}(N)$. 
The goal of TTP is to approach this oracle performance. 

\subsection{Experimental Setup}

We use five personalized text generation tasks from LaMP~\cite{salemi-etal-2024-lamp} and LongLaMP~\cite{kumar2024longlamp}, covering news headlines, scholarly titles, abstracts, product reviews, and topical posts. 
The personalized policy follows a retrieval-augmented generation (RAG) recipe~\cite{salemi-etal-2024-lamp}, conditioning generation on user-history examples retrieved per query. 
Since $r_u^*$ is unobservable, we use ROUGE between generated responses and user-written references as a ground-truth-reward proxy. 
For each user, we sample candidate responses from the policy model and compute their ROUGE scores based on the golden response to construct training data. 
We consider two standard reward models: \textbf{Global RM}, trained on pooled data across users, and \textbf{User RM}, trained per user on $\mathcal{D}_u$. 
For evaluation, the policy model generates $N$ candidates per query and different reward models select the best candidate. We report ROUGE scores averaged across users. 
%---- END sections/2_Preliminaries.tex ----

%---- BEGIN sections/3_theory_1.tex ----
\section{The Promise of Test-Time Personalization}
\label{sec:theory_1}

We begin by establishing the theoretical foundation of TTP: given access to an oracle reward function, the expected utility of selected responses grows logarithmically with the number of candidates. 
We then empirically validate this scaling law and show that oracle TTP can surpass training-based methods~\cite{tan-etal-2024-democratizing}.

\subsection{Theoretical Foundation}

Our theoretical analysis establishes a scaling law for oracle selection:

\begin{theorem}[\textbf{Oracle Scaling Law}]
    \label{thm:oracle_scaling}
    Assume the true reward of responses sampled from $\pi_u(\cdot \mid q)$ is sub-Gaussian with mean $\mu_u$ and variance proxy $\sigma_u^2$. Let $x^*_{\mathrm{oracle}} = \arg\max_i r_u^*(q, x_i)$ denote the oracle selection from $N$ i.i.d. samples. Then the expected population-level utility satisfies:
    \begin{equation}
        \bar{U}_{\mathrm{oracle}}(N) = \mathbb{E}_u\left[U_{\mathrm{oracle},u}(N)\right] \leq \bar{\mu} + \bar{\sigma} \cdot c \sqrt{\ln N}
    \end{equation}
    for $N \geq 2$, where $\bar{\mu} = \mathbb{E}_u[\mu_u]$, $\bar{\sigma} = \mathbb{E}_u[\sigma_u]$, and $c > 0$ is a universal constant.
\end{theorem}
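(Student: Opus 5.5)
The plan is to prove the per-user bound with the classical maximal inequality for sub-Gaussian variables, and then average over users.

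Fix a user $u$ and a query $q$. Write $Y_i = r_u^*(q, x_i) - \mu_u$ for $i=1,\dots,N$. By assumption the $Y_i$ are centered and sub-Gaussian with variance proxy $\sigma_u^2$, so $\mathbb{E}[e^{\lambda Y_i}] \le e^{\lambda^2\sigma_u^2/2}$ for every $\lambda\in\mathbb{R}$. Since $r_u^*(q,x^*_{\mathrm{oracle}}) = \mu_u + \max_i Y_i$, it suffices to bound $\mathbb{E}[\max_i Y_i]$. Independence is not actually needed for this step; a union-type argument through the moment generating function suffices.

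\textbf{Maximal inequality.} For any $\lambda>0$, Jensen's inequality gives
\[
e^{\lambda \mathbb{E}[\max_i Y_i]} \le \mathbb{E}\bigl[e^{\lambda \max_i Y_i}\bigr] \le \sum_{i=1}^N \mathbb{E}\bigl[e^{\lambda Y_i}\bigr] \le N e^{\lambda^2\sigma_u^2/2}.
\]
Hence $\mathbb{E}[\max_i Y_i] \le \frac{\ln N}{\lambda} + \frac{\lambda \sigma_u^2}{2}$. Choosing $\lambda = \sqrt{2\ln N}/\sigma_u$, which is valid because $N\ge 2$ makes $\ln N>0$, gives $\mathbb{E}[\max_i Y_i]\le \sigma_u\sqrt{2\ln N}$. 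Therefore $U_{\mathrm{oracle},u}(N)\le \mu_u + \sigma_u\sqrt{2\ln N}$ with $c=\sqrt{2}$.

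\textbf{Dependence on $q$.} If the mean and variance proxy are allowed to depend on $q$, the bound is first derived conditional on $q$ and then integrated over $q$. In that case I will read $\mu_u$ and $\sigma_u$ as the $q$-averaged quantities, and linearity keeps the bound intact. I will take the statement's $\mu_u,\sigma_u$ to be uniform in $q$ (or already averaged), so this causes no difficulty.

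\textbf{Averaging over users.} The per-user bound is linear in $(\mu_u,\sigma_u)$, and the factor $\sqrt{2\ln N}$ does not depend on $u$. Taking $\mathbb{E}_u$ of both sides therefore gives $\bar U_{\mathrm{oracle}}(N)\le \bar\mu + \bar\sigma\sqrt{2\ln N}$, which is the claim with the universal constant $c=\sqrt2$.

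The only delicate point is the bookkeeping around the phrase ``variance proxy''. If the paper intends $\sigma_u^2$ to be the actual variance while sub-Gaussianity holds with a proxy comparable to it, the constant $c$ absorbs that equivalence factor, which is still universal. The main obstacle is thus interpretive rather than technical: making sure the stated assumption delivers the MGF bound with $\sigma_u^2$.
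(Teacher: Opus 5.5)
Your proof is correct and follows the paper's argument essentially line for line. The paper proves the same maximal inequality (Jensen, bounding the maximum of exponentials by their sum, the sub-Gaussian MGF bound, and the choice $\lambda=\sqrt{2\ln N}/\sigma_u$) to get $U_{\mathrm{oracle},u}(N)\le\mu_u+\sqrt{2}\,\sigma_u\sqrt{\ln N}$, and then takes $\mathbb{E}_u$ by linearity. One correction to your closing hedge: the hedge is unnecessary, since the statement makes $\sigma_u^2$ the variance proxy, and if $\sigma_u^2$ were instead the actual variance, no universal $c$ would work. For example, Bernoulli$(p)$ with $N=1/p$ and $p\to 0$ gives $\mathbb{E}[\max_i X_i]-\mu\to 1-e^{-1}$, while $\sigma\sqrt{\ln N}\to 0$.
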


The formal proof is provided in Appendix~\ref{app:proof_oracle}. The sub-Gaussian assumption is mild and generally satisfied when the policy generates diverse but bounded-quality responses, which is typical for LLM outputs under temperature sampling; we empirically verify this property on our data in Appendix~\ref{app:assumption_validation}.
The $\sqrt{\ln N}$ form arises from the well-known result on the expected maximum of sub-Gaussian random variables, which grows as $O(\sqrt{\ln N})$ for $N$ i.i.d. samples. 

Theorem~\ref{thm:oracle_scaling} establishes a theoretical ceiling for test-time personalization: even a weak policy (low $\bar{\mu}$) can yield high-quality outputs through scaled sampling, provided the reward model can identify the best candidates. 
The key question becomes whether we can build reward models that approach this oracle performance.

\subsection{Empirical Validation}

We validate the oracle scaling law on two representative personalization tasks: scholarly title generation (LaMP-5) and product review generation (LongLaMP). For each task, we sample $N \in \{1, 5, 10, 15, 20, 30\}$ candidates per query and select the best one using the ground-truth reward (average ROUGE-1 and ROUGE-L scores against reference).

\begin{figure}[h]
    \centering
    \includegraphics[width=0.7\textwidth]{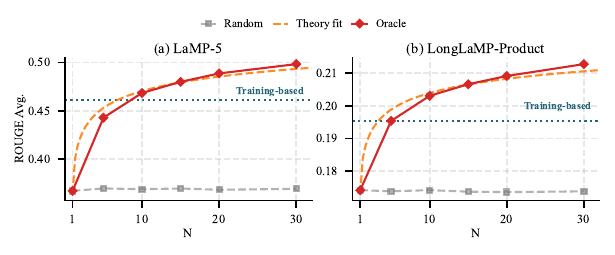}
    \caption{Oracle scaling on (a) LaMP-5 and (b) LongLaMP-Product. Oracle selection (red solid) closely follows the theoretical prediction $\bar{\mu} + \bar{\sigma} c\sqrt{\ln N}$ (orange dashed); the horizontal teal dotted line marks the per-user training-based baseline.}
    \label{fig:oracle_scaling}
\end{figure}

Figure~\ref{fig:oracle_scaling} presents the results.
The oracle scaling curves exhibit clear logarithmic growth, closely matching the theoretical prediction of Theorem~\ref{thm:oracle_scaling}.
Notably, oracle TTP surpasses training-based methods at modest candidate counts ($N \approx 5\text{--}10$), showing that substantial gains are achievable with moderate computational overhead.
These results establish that test-time personalization offers a promising pathway to performance beyond training-based methods, provided we can construct effective reward models.
%---- END sections/3_theory_1.tex ----

%---- BEGIN sections/4_theory_2.tex ----
\section{The Challenges of Test-Time Personalization}
\label{sec:theory_2}

Having established that TTP can substantially improve personalization with oracle reward models, we now investigate whether this potential can be realized with learned reward models. We find that standard approaches fail in surprising ways, then develop a theoretical framework to diagnose the underlying causes.

\subsection{Standard Reward Models Fail to Scale}

We evaluate the two reward model approaches defined in Section~\ref{sec:preliminaries}: Global RM trained on population-level data, and User-specific RM trained individually per user. Figure~\ref{fig:challenges} presents the scaling curves on two representative tasks.

\begin{figure}[h]
    \centering
    \includegraphics[width=0.7\textwidth]{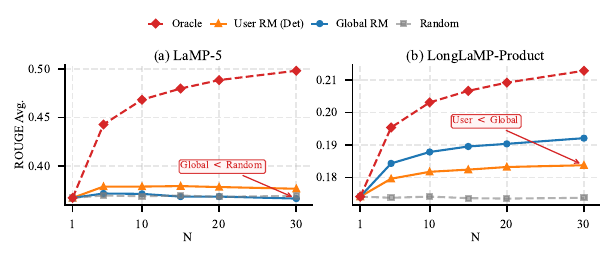}
    \caption{Scaling curves for standard reward models. (a) On LaMP-5, Global RM performs no better than random selection. (b) On LongLaMP-Product Review, User RM underperforms Global RM despite being explicitly trained on personalized user data. Both fall far short of the oracle upper bound.}
    \label{fig:challenges}
\end{figure}

Two unexpected phenomena emerge. First, on LaMP-5, Global RM achieves nearly identical performance to random selection; it fails to provide any meaningful signal for candidate selection. Second, on LongLaMP, User RM underperforms Global RM despite being explicitly personalized. This contradicts the intuition that user-specific training should improve personalization. Both approaches fall far short of the oracle upper bound, indicating that realizing the promise of TTP requires understanding why these standard methods fail.

\subsection{From Correlation to Scaling}

To analyze reward model quality, we introduce the correlation between learned and true rewards:

\begin{definition}[\textbf{Reward Model Correlation}]
    For user $u$, the correlation between a learned reward model $\hat{r}_u$ and the true preference $r_u^*$ is:
    \begin{equation}
        \rho_u = \mathrm{Corr}\left(\hat{r}_u(q,x),\, r_u^*(q,x)\right).
    \end{equation}
\end{definition}

This correlation directly determines scaling behavior:

\begin{lemma}[\textbf{Correlation-Scaling Relationship}]
    \label{lem:corr_scaling}
    Under uniformity assumption, for a reward model with correlation $\rho_u$, the Best-of-N utility satisfies:
    \begin{equation}
        U_u(N) \approx \mu_u + \rho_u \cdot \sigma_u \cdot c\sqrt{\ln N}
    \end{equation}
\end{lemma}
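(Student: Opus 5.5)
The plan is to reduce Best-of-$N$ selection with a learned reward to oracle selection on a surrogate variable, using a linear (Gaussian-style) decomposition of the true reward along the learned reward. I read the ``uniformity assumption'' as follows. For user $u$, the pair $(\hat r_u(q,x), r_u^*(q,x))$ over $x\sim\pi_u(\cdot\mid q)$ behaves like a jointly (sub-)Gaussian vector. The correlation $\rho_u$ is assumed uniform across queries, so it equals the per-query correlation. After standardizing, write
\[
Z^*_i = \rho_u \hat Z_i + \sqrt{1-\rho_u^2}\,\varepsilon_i ,
\]
where $Z^*_i=(r_u^*(q,x_i)-\mu_u)/\sigma_u$ and $\hat Z_i$ is the standardized learned reward. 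The residual $\varepsilon_i$ is uncorrelated with $\hat Z_i$ and is taken to have conditional mean zero given $\hat Z_i$; in the Gaussian case it is independent.

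\textbf{Step 1 (condition on the selected index).} Let $i^*=\arg\max_i \hat Z_i$. This index is a function of $\hat Z_{1:N}$ only. Hence
\[
\mathbb{E}[\varepsilon_{i^*}] = \mathbb{E}\bigl[\mathbb{E}[\varepsilon_{i^*}\mid \hat Z_{1:N}]\bigr]=0,
\]
because the samples are i.i.d. and each $\varepsilon_i$ has zero conditional mean. Therefore
\[
U_u(N)=\mu_u+\sigma_u\,\mathbb{E}[Z^*_{i^*}]=\mu_u+\rho_u\sigma_u\,\mathbb{E}\bigl[\max_i \hat Z_i\bigr].
\]
This holds exactly under the linear-conditional-mean assumption.

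\textbf{Step 2 (invoke the oracle analysis).} The quantity $\mathbb{E}[\max_i \hat Z_i]$ is the expected maximum of $N$ i.i.d. standardized sub-Gaussian variables. This is the same object controlled in the proof of Theorem~\ref{thm:oracle_scaling}, which gives the upper bound $c\sqrt{\ln N}$. For the Gaussian case, the matching lower bound $(1-o(1))\sqrt{2\ln N}$ justifies replacing the inequality by ``$\approx$''. Substituting yields
\[
U_u(N)\approx \mu_u+\rho_u\sigma_u c\sqrt{\ln N}.
\]
The formula is consistent at the endpoints: $\rho_u=1$ recovers the oracle, $\rho_u=0$ gives random selection, and $\rho_u<0$ gives degradation.

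\textbf{Main obstacle.} The hard step is justifying the linear decomposition with zero conditional mean of $\varepsilon_i$ given $\hat Z_i$, together with the uniformity of $\rho_u$ across queries. Correlation alone only removes the linear part. If $\mathbb{E}[Z^*\mid\hat Z]$ is nonlinear, then $\mathbb{E}[\varepsilon_{i^*}]$ need not vanish, because selection picks the upper tail of $\hat Z$. If the correlation varies across queries, the population $\rho_u$ mixes within- and between-query components. I would handle this by stating the assumption explicitly: joint Gaussianity, or linear regression of $Z^*$ on $\hat Z$, with a query-independent within-query correlation. The tail mismatch between the sub-Gaussian constant and the exact Gaussian $\sqrt{2\ln N}$ would be absorbed into $c$, which is why the result is stated as an approximation rather than an inequality.
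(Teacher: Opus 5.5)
Your proposal is correct and follows essentially the same route as the paper: assume a linear conditional expectation (bivariate-normal structure) of the true reward given the learned reward, apply the tower property at the selected index to get $U_u(N)=\mu_u+\rho_u\sigma_u\,\mathbb{E}[\max_i \hat Z_i]$, and then plug in the sub-Gaussian expected-maximum bound from Theorem~\ref{thm:oracle_scaling}, absorbing constants into $c$. Your residual decomposition with explicit conditioning on all of $\hat Z_{1:N}$, and your appeal to the Gaussian lower bound to justify ``$\approx$'', are slightly more careful versions of the paper's steps; the paper conditions only on $\hat r(\hat x^*)$ and writes the upper bound directly as an equality.
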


The proof is provided in Appendix~\ref{app:proof_correlation}, and the bivariate-linearity assumption it relies on is empirically verified in Appendix~\ref{app:assumption_validation}. Intuitively, correlation acts as a scaling coefficient: positive $\rho$ yields increasing curves, $\rho \approx 0$ yields flat curves, and negative $\rho$ yields decreasing curves. This framework allows us to diagnose reward model failures by examining correlation distributions.

\subsection{Diagnosing Failure Modes}

Lemma~\ref{lem:corr_scaling} motivates two diagnostic correlations: per-user $\rho_u$ (averaged over queries) and per-query $\rho_q$ (across candidates within a query). Figure~\ref{fig:correlation_diagnostics} reveals two failure modes, each explaining one phenomenon in Figure~\ref{fig:challenges}. \textbf{Query-level reward hacking} ($\rho_q < 0$): on LaMP-5, 46\% of queries exhibit negative correlation under Global RM (37\% under User RM), causing it to systematically prefer worse candidates; this explains why Global RM cannot beat random in Figure~\ref{fig:challenges}(a). \textbf{User-level collapse} ($\rho_u < 0.1$): User RM achieves higher mean correlation than Global RM (0.37 vs 0.24 on LaMP-5; 0.43 vs 0.40 on LongLaMP-Product), but 25\% of LongLaMP-Product users collapse under User RM (vs 0\% under Global RM), where the reward model emits near-constant predictions and provides no useful signal; this explains why User RM falls below Global RM in Figure~\ref{fig:challenges}(b).

\begin{figure}[t]
    \centering
    \includegraphics[width=\textwidth]{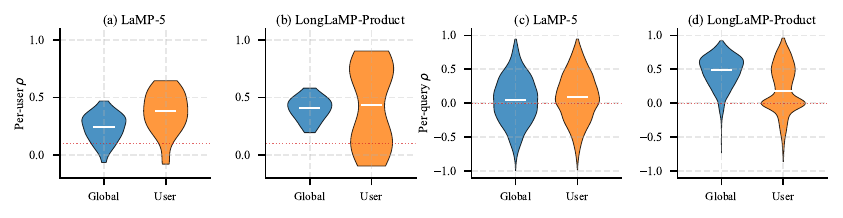}
    \caption{Correlation diagnostics on LaMP-5 and LongLaMP-Product Review. \textbf{(a, b)} Per-user correlation $\rho_u$: User RM achieves higher mean correlation but exhibits a heavy lower tail crossing the red dotted line at $\rho_u\!=\!0.1$ (user-level collapse threshold); Global RM remains tightly concentrated. \textbf{(c, d)} Per-query correlation $\rho_q$: a substantial fraction of queries lie below the red dotted line at $\rho_q\!=\!0$ (query-level reward hacking).}
    \label{fig:correlation_diagnostics}
\end{figure}

\subsection{Unified Scaling Law}

We formalize these failure modes and derive their joint effect on scaling behavior:

\begin{definition}[\textbf{Failure Rates}]
\label{def:failure_rates}
    For a reward modeling approach, we define: 
    \begin{itemize}
        \item \textbf{Collapse rate} $\alpha$: Fraction of users with $\rho_u < 0.1$;
        \item \textbf{Reward hacking rate} $\beta$: Fraction of queries with $\rho_q < 0$.
    \end{itemize}
\end{definition}

\begin{proposition}[\textbf{Unified Scaling Law}]
    \label{prop:unified_law}
    The population-level Best-of-N utility can be approximated as:
    \begin{equation}
        \bar{U}(N) \approx \bar{\mu} + \underbrace{(1-\alpha)}_{\text{non-collapsed}} \cdot \underbrace{\left[(1-\beta)\bar{\rho}_+ - \beta|\bar{\rho}_-|\right]}_{\text{effective correlation}} \cdot \bar{\sigma} \cdot c\sqrt{\ln N}
    \end{equation}
    where $\bar{\rho}_+ > 0$ is the mean correlation on non-hacked queries and $\bar{\rho}_- < 0$ is the mean correlation on hacked queries.
\end{proposition}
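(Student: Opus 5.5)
The plan is to build the population-level law by applying Lemma~\ref{lem:corr_scaling} one user at a time, then averaging over users in two nested stages: first over users (separating collapsed from non-collapsed), then within each non-collapsed user over queries (separating hacked from non-hacked). The argument is an approximation of the same kind as the Lemma, so the task is to make explicit which independence and homogeneity assumptions turn products of averages into averages of products.

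\textbf{Step 1 (per-user law).} For each user $u$, Lemma~\ref{lem:corr_scaling} gives $U_u(N)\approx \mu_u+\rho_u\sigma_u c\sqrt{\ln N}$. Taking $\mathbb{E}_u$ yields
\begin{equation*}
\bar U(N)\approx\bar\mu+c\sqrt{\ln N}\,\mathbb{E}_u[\rho_u\sigma_u].
\end{equation*}

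\textbf{Step 2 (collapse split).} I will assume $\sigma_u$ is uncorrelated with $\rho_u$, so that $\mathbb{E}_u[\rho_u\sigma_u]\approx\bar\sigma\,\mathbb{E}_u[\rho_u]$. I then split users by the collapse indicator $\rho_u<0.1$, which holds with probability $\alpha$. Collapsed users emit near-constant predictions, so I take their contribution to be zero. Their selection then behaves like random choice, with $U_u\approx\mu_u$. This gives
\begin{equation*}
\mathbb{E}_u[\rho_u]\approx(1-\alpha)\,\mathbb{E}[\rho_u\mid\text{non-collapsed}].
\end{equation*}

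\textbf{Step 3 (hacking split within users).} Best-of-$N$ selection happens within a single query, so the operative correlation for user $u$ is the within-query correlation $\rho_q$, not the pooled one. I will therefore apply the Lemma's bivariate-linearity argument per query. Conditional on $q$, the selected reward is approximately $\mu_q+\rho_q\sigma_q c\sqrt{\ln N}$. Averaging over queries gives $\rho_u^{\text{eff}}\approx\mathbb{E}_q[\rho_q]$, again assuming $\sigma_q\approx\sigma_u$ is homogeneous across queries. I will also assume the hacking rate is independent of the collapse status. Then among non-collapsed users:
\begin{itemize}
\item a fraction $1-\beta$ of queries have mean correlation $\bar\rho_+$;
\item a fraction $\beta$ of queries have mean correlation $\bar\rho_-=-|\bar\rho_-|$.
\end{itemize}
The law of total expectation gives an effective correlation of $(1-\beta)\bar\rho_+-\beta|\bar\rho_-|$. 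Substituting this back through Steps 2 and 1 yields the stated formula.

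\textbf{Main obstacle.} The hard part is Step 3 together with the decoupling assumptions. There is a mismatch between the quantity Lemma~\ref{lem:corr_scaling} uses and the quantity the law needs. The Lemma is stated with $\rho_u$, a correlation pooled over $(q,x)$. Selection, however, depends only on the within-query ranking. Pooled correlation also mixes in between-query variation, which the argmax cannot exploit.

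I would first reformulate the Lemma at the query level. I would then argue that the paper's diagnostics (per-user $\rho_u$ averaged over queries) were defined precisely to match this. Next, I would state explicitly the approximations being made:
\begin{itemize}
\item $\sigma$ is independent of $\rho$;
\item $\sigma_q$ is homogeneous within each user;
\item collapse status and hacking are independent;
\item collapsed users contribute approximately zero.
\end{itemize}
Each of these converts an expectation of a product into a product of expectations. I would flag them as the source of the ``$\approx$'' and point to the empirical fit in the experiments as their justification.
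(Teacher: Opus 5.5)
Your proposal follows the paper's own argument: split users into collapsed (contributing only $\bar\mu$) and non-collapsed, split the latter's queries by hacking status, apply Lemma~\ref{lem:corr_scaling} to each part, and use variance homogeneity to factor out $\bar\sigma$; your query-level restatement of the Lemma just makes explicit the step the paper takes when it applies the Lemma ``to each partition'' of queries. The one divergence is that you assume hacking is independent of collapse status, whereas the paper avoids this by measuring $\beta$ and $\bar\rho_\pm$ only over non-collapsed users (its ``collapse absorbs hacking'' assumption), which is safer because near-constant predictors would produce per-query correlations of essentially random sign.
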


This formula reveals the \textbf{personalization-stability trade-off}: Global RM achieves stability (low $\alpha$ and $\beta$) by averaging across users, but this same averaging limits its ability to capture individual preferences. 
User RM achieves higher correlation for many users but suffers from both collapse and reward hacking due to limited per-user data.

Proposition~\ref{prop:unified_law} (proof in Appendix~\ref{app:unified_law}) suggests that an effective approach must simultaneously achieve: (1) high base correlation through personalization, (2) low collapse rate for user-level stability, and (3) low reward hacking rate for query-level stability. We empirically validate the formula in Section~\ref{sec:experiments_main} and Appendix~\ref{app:scaling_law_validation}: the four measured quantities forecast observed Best-of-$N$ scaling curves at relMAE below $3\%$ across all tasks. In the next section, we show that probabilistic reward modeling is the unique design that achieves the three desiderata above.
%---- END sections/4_theory_2.tex ----

%---- BEGIN sections/5_theory_3.tex ----
\section{Probabilistic Reward Modeling}
\label{sec:theory_3}

The analysis in Section~\ref{sec:theory_2} reveals that effective TTP requires high correlation, low collapse rate $\alpha$, and low reward hacking rate $\beta$. We now show that probabilistic reward modeling achieves all three by introducing a learned variance $\sigma^2$ that absorbs uncertainty rather than forcing premature commitment.

\subsection{Theoretical Analysis}

Consider the Gaussian negative log-likelihood (NLL) loss
\begin{equation}
\label{eq:nll_loss}
    \mathcal{L}_{\text{NLL}} = \tfrac{1}{2}\log \sigma^2(x) + \frac{(y - \mu(x))^2}{2\sigma^2(x)},
\end{equation}
whose gradient with respect to $\mu$ is $(\mu - y) / \sigma^2$. The learned $\sigma^2$ thus acts as an adaptive per-input weight, giving two complementary effects.

\begin{lemma}[Gradient Buffering]
\label{lem:gradient_buffering}
For inputs where $y$ is poorly predictable from $x$ (collapse-prone users), increasing $\sigma^2$ attenuates the gradient on $\mu$, preventing the constant-prediction equilibrium that traps deterministic models. This reduces the collapse rate: $\alpha^{\text{prob}} < \alpha^{\text{det}}$.
\end{lemma}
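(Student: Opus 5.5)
We make the lemma precise in a stylized model and then compare the deterministic (MSE) equilibrium with the probabilistic (NLL) equilibrium. Model a collapse-prone user $u$ with a linear-feature reward model. The prediction is $\mu(x)=\bar y_u+w^\top\phi(x)$, where $\phi$ holds centered features and $y=r_u^*(q,x)$. The target decomposes as $y=\bar y_u+f(x)+\varepsilon$, with a weak learnable signal $f$ and large irreducible noise $\varepsilon$ of variance $s_u^2$. The low predictability of $y$ is exactly $\mathrm{Var}(f)/s_u^2\ll 1$.

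\textbf{Step 1: the deterministic model collapses.} Under MSE the gradient is $\nabla_w=\mathbb{E}[(\mu-y)\phi]$. The signal component of this gradient has size $O(\mathrm{Var}(f)^{1/2})$. The noise component has size $O(s_u/\sqrt{n_u})$, and with regularization (weight decay $\lambda$) or early stopping it dominates. The stationary point is therefore $w\approx 0$, i.e.\ $\mu(x)\approx\bar y_u$. Because $\mathrm{Var}(\hat r_u)\to 0$, we get $\rho_u<0.1$ for this user. We will make the last step explicit: a shrunken $w$ with $\|w\|=O(\lambda^{-1}\cdot\text{signal})$ yields a vanishing predictive spread. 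Beyond that, noise-fitted directions drive $\rho_u$ toward $0$.

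\textbf{Step 2: gradient buffering under NLL.} Using the gradient $(\mu-y)/\sigma^2$ from \eqref{eq:nll_loss}, we first solve for the optimal variance, $\sigma^{2\star}(x)=\mathbb{E}[(y-\mu(x))^2\mid x]\approx s_u^2$. The per-sample noise gradient on $w$ is then rescaled by $1/s_u^2$. This suppresses the large-residual, noise-driven updates that push $w$ toward the degenerate constant solution or toward noise directions.

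\textbf{Step 3: the signal is preserved.} The key point is heteroscedasticity across inputs. Where $\sigma^2(x)$ is smaller, the weight $1/\sigma^2(x)$ is relatively larger. The effective objective becomes a weighted least squares that emphasizes the predictable inputs, so the signal direction retains a nonzero fixed point $w^\star\propto \mathbb{E}[\phi\phi^\top/\sigma^2]^{-1}\mathbb{E}[f\phi/\sigma^2]$. We then bound $\mathrm{Corr}(\phi^\top w^\star, f)$ from below.

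\textbf{Step 4: aggregate to collapse rates.} Summing over users, the set of users with $\rho_u<0.1$ under NLL is contained in the corresponding set under MSE, with strict inclusion whenever some user has heteroscedastic noise. This gives $\alpha^{\text{prob}}<\alpha^{\text{det}}$.

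\textbf{Main obstacle.} With purely homoscedastic noise and a shared $\sigma$, NLL reduces to rescaled MSE, so the lemma cannot hold unconditionally. The proof must therefore assume input-dependent noise within collapse-prone users. It must also argue that the learned $\sigma^2$ does not itself drift to a trivial solution, and Step 3 is where this has to be handled. I would first try a two-cluster mixture (low-noise and high-noise inputs), where all quantities are computable in closed form. Beyond that model, the claim is best presented as a heuristic supported by the appendix experiments.
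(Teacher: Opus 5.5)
There is a genuine gap: Steps 1 and 3 compare MSE and NLL at different levels, and neither one moves the quantity that defines collapse.

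\textbf{Steps 1 and 3.} In Step 1, shrinking $w$ by weight decay or early stopping does not lower $\rho_u$. Pearson correlation is invariant to positive rescaling of $\hat r_u$, so ``$\mathrm{Var}(\hat r_u)\to 0$'' yields nothing; only the \emph{direction} of the fitted $w$ matters. In Step 3 you switch to the population fixed point of the weighted objective. But in population, the unweighted least-squares solution $w_{\mathrm{OLS}}=\mathbb{E}[\phi\phi^\top]^{-1}\mathrm{Cov}(\phi,y)$ is, by Cauchy--Schwarz, exactly the maximizer of $\mathrm{Corr}(\phi^\top w,y)$ over all $w$. Your $w^\star\propto\mathbb{E}[\phi\phi^\top/\sigma^2]^{-1}\mathbb{E}[f\phi/\sigma^2]$ equals it when $f$ is linear in $\phi$ and can only do worse otherwise. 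So reweighting cannot beat MSE on $\rho_u$ at the population level.

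Any advantage of NLL must therefore be a finite-sample efficiency effect (Aitken: the correctly weighted estimator has smaller covariance). That requires analyzing both estimators at the same $n_u$ and the same regularization. It gives at best a comparison in expectation or probability, not the deterministic inclusion of collapsed-user sets you assert in Step 4. Heteroscedasticity alone also does not give strictness. You additionally need a user whose MSE fit falls below $0.1$ while the weighted fit does not.

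\textbf{The low-SNR model of collapse.} Modeling collapse-prone users by $\mathrm{Var}(f)/s_u^2\ll 1$ conflicts with the criterion itself. $\alpha$ is defined through $\mathrm{Corr}(\hat r_u,r_u^*)$ with $r_u^*=y$, and every predictor $g(x)$ satisfies $\mathrm{Corr}(g,y)\le\sqrt{\mathrm{Var}(f)/(\mathrm{Var}(f)+s_u^2)}$. Once $\mathrm{Var}(f)/s_u^2<1/99$ this ceiling is below $0.1$, and no loss avoids collapse. The bound on $\mathrm{Corr}(\phi^\top w^\star,f)$ you plan in Step 3 therefore targets the wrong quantity.

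\textbf{How the paper argues.} The paper's argument is different, and only heuristic. It takes collapse-prone users to have a narrow label range $\Delta_y=y_{\max}-y_{\min}$, on which MSE settles at $\mu\approx\bar y$. It then uses the per-sample stationarity condition of the NLL in $\sigma^2$, namely $\sigma^{*2}=(y-\mu)^2$, to conclude that the $\mu$-gradient is $O(1/|y-\mu|)$ and hence damped where residuals are large. Empirically, those collapsed users are quite predictable: Prob RM reaches $\rho=0.78$ on the LaMP-4 example and $0.83$--$0.98$ on the other tasks. This points to MSE training dynamics under small label spread, not to low signal-to-noise.

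Your remark that homoscedastic NLL is merely rescaled MSE is a genuine sharpening. It shows that any valid argument must use the input dependence of $\sigma^2$. The paper's per-sample equilibrium does this implicitly: profiling $\sigma^2$ out sample by sample turns the loss into $\log|y-\mu|$ plus a constant. As written, though, neither your steps nor the paper's sketch actually establishes $\alpha^{\text{prob}}<\alpha^{\text{det}}$.
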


\begin{lemma}[Implicit Regularization]
\label{lem:implicit_regularization}
For inputs with inconsistent training signals, NLL drives $\sigma^2$ large, down-weighting their contribution to the gradient on $\mu$. This prevents overfitting to spurious patterns that produce negative per-query correlation, reducing the hacking rate: $\beta^{\text{prob}} < \beta^{\text{det}}$.
\end{lemma}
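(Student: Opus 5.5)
We need to prove the Implicit Regularization lemma, and we will do so by making it precise in a stylized model. Model the reward model as $\mu_\theta(q,x)$ with a variance head $\sigma^2(q,x)$. Write the per-input training signal as $y = m(q,x) + \epsilon$, where $m$ is the consistent (user-preference) component and $\epsilon$ is inconsistency noise whose conditional variance $v(q,x)$ is large for the "inconsistent" inputs.

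\textbf{Step 1: the optimal variance.} Minimize the expected NLL in \eqref{eq:nll_loss} pointwise over $\sigma^2$ with $\mu$ held fixed. Setting the derivative $\tfrac{1}{2\sigma^2} - \tfrac{\mathbb{E}[(y-\mu)^2]}{2\sigma^4}$ to zero gives
\[
\sigma^{2\star}(q,x) = \mathbb{E}\big[(y-\mu)^2 \mid q,x\big] = v(q,x) + (m-\mu)^2 .
\]
So NLL drives $\sigma^2$ large exactly on inconsistent inputs. This is the first claim.

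\textbf{Step 2: the weighted objective for $\mu$.} Substitute $\sigma^2$ back, treating it as a stop-gradient weight, as in the $(\mu-y)/\sigma^2$ gradient quoted above. The $\mu$-update is then weighted least squares with weights $w = 1/\sigma^{2\star}$. Split the training set into a consistent part $C$ (weight $\approx 1/v_C$) and an inconsistent part $I$ (weight $\approx 1/v_I$, with $v_I \gg v_C$). The spurious fit comes from the noise $\epsilon$ on $I$; its contribution to the gradient is scaled by $v_C/v_I$ relative to the deterministic MSE model, which weights every input equally.

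For a linear or kernel $\mu$, the induced error on $\hat\mu$ is the variance of a weighted least-squares estimator. By Gauss--Markov, inverse-variance weighting minimizes estimator variance, so the probabilistic estimator has strictly smaller error variance than unweighted MSE whenever $v$ is heterogeneous.

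\textbf{Step 3: from error variance to per-query correlation.} For a query $q$, take $\rho_q = \mathrm{Corr}(\hat\mu(q,\cdot), r^*(q,\cdot))$ across candidates, and write $\hat\mu = r^* + e$ with $e$ the fitting error. Under the bivariate-Gaussian/linearity assumption used for Lemma~\ref{lem:corr_scaling}, $\rho_q < 0$ requires $\mathrm{Cov}(e, r^*) < -\mathrm{Var}(r^*)$, i.e.\ a sufficiently large error component anti-aligned with quality. Model the spurious component as random with variance $s^2$. Then $P(\rho_q < 0)$ is increasing in $s^2$; for a Gaussian $e$ it is a tail probability $\Phi(-\mathrm{Var}(r^*)/(s\,\mathrm{sd}(r^*)))$. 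Since Step 2 gives $s^2_{\text{prob}} < s^2_{\text{det}}$, averaging over queries yields $\beta^{\text{prob}} < \beta^{\text{det}}$.

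\textbf{Main obstacle.} The hardest step is Step 3: linking a reduction in aggregate estimator variance to a reduction in the \emph{fraction} of queries with negative correlation. This requires a distributional assumption on the spurious error direction. I would first try a Gaussian random-effects model for $e$, which makes $\beta$ an explicit monotone tail probability. A secondary difficulty is that $\mu$ and $\sigma^2$ are learned jointly: the residual $(m-\mu)^2$ also inflates $\sigma^2$ and can down-weight hard but consistent inputs. I would handle this by assuming $\mu$ is well-specified on $C$, so that residuals there are small.
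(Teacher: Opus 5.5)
Your proposal is correct as a stylized formalization, and it follows a genuinely different, more rigorous route than the paper's. Your Steps 1--2 coincide with the paper's justification. The paper reads the NLL as $\tfrac12\log\sigma^2$ plus a weighted MSE with weight $w=1/\sigma^2$. It observes that unfittable samples push $\sigma^2$ up, with the log term preventing $\sigma^2\to\infty$; the equilibrium $\sigma^{\ast 2}=(y-\mu)^2$ appears in the proof of Lemma~\ref{lem:gradient_buffering}. It then simply \emph{asserts} that the down-weighted noisy samples yield a $\mu$ with higher $\rho_q$ and lower $\beta$.

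What you add is the bridge the paper omits. Aitken's generalized Gauss--Markov theorem plus a Gaussian tail computation turns ``less influence of noisy samples'' into an actual inequality on the hacking probability. As a bonus, your formula $\Phi(-\mathrm{sd}(r^\ast)/s)$ predicts more hacking on low-spread queries, which matches the low/high-spread split reported in the appendix.

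To make this go through, several assumptions must be stated explicitly:
\begin{itemize}
\item $m$ must lie in the linear/kernel class on \emph{all} inputs, not only on $C$. Otherwise the weighted and unweighted fits target different pseudo-true parameters. A bias term, which Gauss--Markov does not compare, then enters $\mathrm{Cov}(e,r^\ast)$ on $I$-type queries.
\item The weights must be the true $1/v$ and independent of the training noise, i.e.\ an oracle or group-level variance head, giving feasible GLS asymptotically. The per-sample equilibrium $(y-\mu)^2$ would instead give noise-dependent weights that blow up on well-fit points.
\item You need the Loewner-order form of the theorem, applied query by query to the functional $a_q^\top(\hat\theta-\theta)$ with $a_q\propto\sum_i (r^\ast_i-\bar r^\ast)\phi(q,x_i)$. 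An aggregate (trace) variance reduction does not give $s^2_{\text{prob}}<s^2_{\text{det}}$ for each query.
\end{itemize}

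Two further caveats limit what you actually establish. First, all $\rho_q$ share a single $\hat\theta$, so you obtain $\mathbb{E}[\beta^{\text{prob}}]\le\mathbb{E}[\beta^{\text{det}}]$, strict when the two covariances differ along some $a_q$, rather than a sure inequality. Second, your model captures estimation variance in an underparameterized regime, not memorization. For an overparameterized linear model trained by gradient descent from zero to zero loss, any positive reweighting leaves the limit, the minimum-norm interpolant, unchanged. The mechanism therefore implicitly relies on limited capacity or early stopping, a point that neither your argument nor the paper's heuristic addresses.
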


Together (proofs in Appendix~\ref{app:proof_lemma51_52}), the two lemmas give probabilistic User RM the unique combination required by Proposition~\ref{prop:unified_law}: low $\alpha$, low $\beta$, and high $\bar{\rho}_{+}$. This produces the monotonic Best-of-$N$ scaling we observe empirically (Section~\ref{sec:experiments_main}), in contrast to the flat or non-monotonic curves of deterministic models when failure modes dominate.

\subsection{Implementation}

We extend a language model backbone with two heads predicting per-input mean and variance:
\begin{equation}
    \mu(x) = \text{Sigmoid}(\text{MLP}_\mu(\mathbf{h})), \qquad \sigma^2(x) = \text{Softplus}(\text{MLP}_\sigma(\mathbf{h})),
\end{equation}
where $\mathbf{h}$ is the masked-mean-pooled hidden state from the backbone, which we adapt per user via LoRA for parameter efficiency.

The training objective combines the NLL loss in Eq.~\eqref{eq:nll_loss} with a margin loss that sharpens ranking among high-quality candidates, the regime that matters for Best-of-$N$ selection:
\begin{equation}
    \mathcal{L}_{\text{contrast}} = \sum_{\substack{i,j:\, y_i > y_j \\ y_i > \tau}} \max(0, m - (\mu_i - \mu_j)),
\end{equation}
where $\tau$ is a high-score threshold and $m$ is the margin, giving $\mathcal{L} = \mathcal{L}_{\text{NLL}} + \lambda \mathcal{L}_{\text{contrast}}$. At inference, we score candidates by $\mu$ alone; the variance is used only at training time. Backbone, hyperparameters, and prompt templates appear in Appendix~\ref{app:implementation}, an ablation across architectural components and the loss decomposition is in Appendix~\ref{app:ablation}, and computational-cost measurements (training and inference timing) are in Appendix~\ref{app:compute-efficiency}.

%---- END sections/5_theory_3.tex ----

%---- BEGIN sections/6_experiments.tex ----
\section{Experiments}
\label{sec:experiments}

We evaluate Test-Time Personalization on five personalized text-generation tasks, validating both the predictive scaling law of Proposition~\ref{prop:unified_law} and the practical effectiveness of probabilistic RM.

\begin{figure*}[t]
    \centering
    \includegraphics[width=\textwidth]{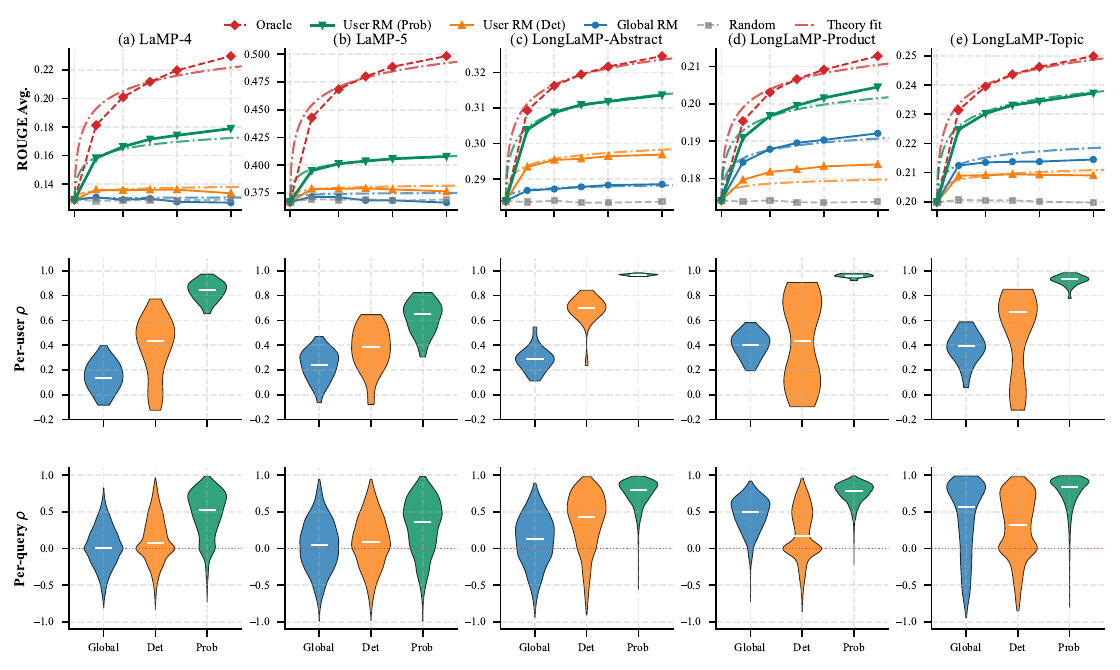}
    \caption{Main TTP results under the RAG policy across five tasks (a)--(e). \textbf{Top row}: observed Best-of-$N$ ROUGE (solid lines with markers) and theory predictions (dash-dot lines), where each method's prediction is computed by substituting its four diagnostic quantities $(\alpha, \beta, \bar{\rho}_{+}, \bar{\rho}_{-})$ into Proposition~\ref{prop:unified_law} (oracle uses Theorem~\ref{thm:oracle_scaling}); theory and observation agree at relMAE $<$ 3\%. \textbf{Middle row}: per-user correlation $\rho_u$ violin distributions. \textbf{Bottom row}: per-query correlation $\rho_q$ violin distributions; the red dotted line marks $\rho_q\!=\!0$. Probabilistic User RM (\textcolor{mygreen}{green}) is the only method whose curve scales monotonically, and the right-shifted, tightened correlation distributions confirm that this gain is driven by the elimination of user-level collapse and query-level reward hacking.}
    \label{fig:main_results}
\end{figure*}

\subsection{Setup}

\paragraph{Datasets.}
We use five tasks from LaMP~\cite{salemi-etal-2024-lamp} (4-News Headline Generation, 5-Scholarly Title Generation) and LongLaMP~\cite{kumar2024longlamp} (Abstract Generation, Product Review Writing, Topic Writing).
Following~\citet{tan-etal-2024-democratizing}, we select the top-$30$ users with the longest profiles for LaMP and the top-$20$ for LongLaMP.

\paragraph{Policy Families.}
Our main results use a retrieval-augmented generation (RAG) policy~\cite{salemi-etal-2024-lamp} that conditions on user-history examples retrieved per query.
To probe whether TTP gains depend on the choice of generator, we additionally evaluate two alternatives in Appendix~\ref{app:e2_full_table}: \emph{Persona prompting}~\cite{ryan-etal-2025-synthesizeme} (a user-style description synthesised from $15$ history samples by an external LLM) and \emph{Persona+RAG} (concatenating both).
All three use \texttt{Qwen3-4B-Instruct}~\cite{yang2025qwen3} as the generator and produce $N{=}30$ candidates per query at temperature $T{=}1.7$ (LaMP) or $T{=}1.5$ (LongLaMP).
We use $80\%$ of each user's data to train reward models and $20\%$ for evaluation.

\paragraph{Reward Models.}
We compare five selection strategies:
\textbf{Random} (uniform pick),
\textbf{Global RM} (single model on pooled data),
\textbf{Deterministic User RM} (per-user, MSE loss with high-score contrastive loss),
\textbf{Probabilistic User RM} (per-user, Gaussian NLL with high-score contrastive loss; ours), and
\textbf{Oracle} (selection by ground-truth ROUGE).
All reward models share the \texttt{Qwen2.5-1.5B-Instruct}~\cite{qwen2025qwen25} backbone with LoRA ($r{=}8$); the probabilistic variant adds a softplus variance head.
For comparison with training-based personalization, we also fine-tune per-user LoRA adapters on the policy following~\cite{tan-etal-2024-democratizing}.
Variance-based inference strategies are explored in Appendix~\ref{app:variance_selection} but offer no improvement over mean-based selection.

\paragraph{Evaluation.}
We sample $N \in \{1, 5, 10, 15, 20, 30\}$ candidates per validation query and report the average of ROUGE-1 and ROUGE-L across users.
Each experiment is repeated $10$ times; standard deviations remain below $0.007$ across methods and $N$.
Robustness to ROUGE is assessed with BERTScore and an LLM-as-judge in Appendix~\ref{app:alternative_metrics}; sensitivity to per-user data volume, LoRA rank, and statistical stability across repetitions are reported in Appendix~\ref{app:sensitivity}.

\subsection{Main Results}
\label{sec:experiments_main}

Figure~\ref{fig:main_results} presents three views of evaluation with RAG policy: scaling curves with theory overlay, per-user correlation distributions, and per-query correlation distributions.
We highlight three findings.

\paragraph{Probabilistic RM is the only method that scales reliably.}
Across all five tasks (Figure~\ref{fig:main_results}, top row), Probabilistic User RM is the only method whose Best-of-$N$ curve increases monotonically.
By $N{=}30$, Probabilistic RM closes between $51\%$ and $84\%$ of the oracle gap depending on the task, and consistently surpasses the per-user LoRA training-based baseline by $N{=}5$--$10$ on three of the five tasks (per-task numbers in Appendix~\ref{app:e2_full_table}).
The same selector ranking holds under two additional policy families (Persona prompting and Persona+RAG, Appendix~\ref{app:e2_full_table}), confirming that TTP behaves as a policy-orthogonal amplifier whose effect is determined by the reward model rather than the underlying generator.

\paragraph{Theory predictions match observations and the correlation distributions show why.}
The dash-dot theory curves in the top row are computed by substituting each reward model's four measured quantities $(\alpha, \beta, \bar{\rho}_{+}, \bar{\rho}_{-})$ into Proposition~\ref{prop:unified_law}, with no per-method fitting; they yield effective correlations $\rho_{\text{eff}} = 0.20 / 0.19 / 0.61$ for Global / Det / Prob RM and predict the observed curves at relMAE $4.4\% / 2.7\% / 1.7\%$ averaged across the five tasks (per-task table in Appendix~\ref{app:scaling_law_validation}).
The middle and bottom rows of Figure~\ref{fig:main_results} explain why these $\rho_{\text{eff}}$ values arise: probabilistic modeling shifts both per-user and per-query correlation distributions rightward and tightens them, dropping the user-level collapse rate $\alpha$ from $0.15$ (Det) to $0$ and the query-level hacking rate $\beta$ from $0.25$ to $0.06$ averaged across all five tasks.
These distributional shifts are exactly the conditions Proposition~\ref{prop:unified_law} requires for monotonic improvement, completing a tight loop between theory, diagnosis, and method.

\paragraph{Task characteristics shape TTP headroom.}
Long-form tasks (Abstract, Product, Topic) yield larger oracle gaps and correspondingly larger TTP gains than short-form tasks (Headlines, Titles), where small token differences produce large ROUGE swings and reduce learnable signal.
The diagnostic framework explains this directly: short-form tasks exhibit smaller Oracle $\bar{\sigma}$, hence a smaller logarithmic-growth coefficient $\bar{\sigma} c \sqrt{\ln N}$ (Theorem~\ref{thm:oracle_scaling}).

\subsection{Failure-Mode Analysis}

To understand \emph{when} and \emph{how} the two failure modes manifest in practice, we conduct a deeper analysis on LaMP-4 (Figure~\ref{fig:failure_mode_lamp4}); cross-task results appear in Appendix~\ref{app:failure_mode_multitask}.

\begin{figure}[t]
    \centering
    \includegraphics[width=\textwidth]{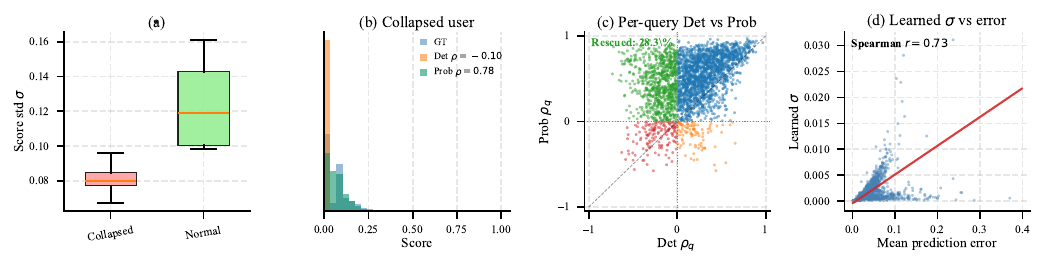}
    \caption{Failure-mode analysis on LaMP-4. \textbf{(a)} Ground-truth ROUGE standard deviation distinguishes collapsed users ($\rho_u\!<\!0.1$ under Det RM) from normal users ($\rho_u\!>\!0.5$). \textbf{(b)} Score histogram of a representative collapsed user: GT scores cluster near zero, Det RM degenerates to near-constant predictions ($\rho\!=\!{-}0.10$), while Prob RM preserves meaningful variation ($\rho\!=\!0.78$). \textbf{(c)} Per-query correlation scatter (Det vs Prob): green points in the upper-left quadrant are queries where Prob ``rescues'' Det's negative correlation. \textbf{(d)} Learned variance $\sigma^{2}$ correlates strongly with prediction error (Spearman $r\!=\!0.73$), evidence that Prob RM is well-calibrated.}
    \label{fig:failure_mode_lamp4}
\end{figure}

\paragraph{When does user-level collapse occur?}
Figure~\ref{fig:failure_mode_lamp4}~(a) groups users by their ground-truth ROUGE variance: collapsed users (under Det RM) have significantly lower label variance than normal users ($p\!<\!0.001$).
A representative collapsed user (Figure~\ref{fig:failure_mode_lamp4}~(b)) illustrates the mechanism: when GT scores cluster narrowly, Det RM emits near-constant predictions ($\rho\!=\!{-}0.10$), while Prob RM preserves meaningful variation ($\rho\!=\!0.78$).
This aligns with Lemma~\ref{lem:gradient_buffering}: NLL training increases $\sigma^{2}$ on weak-signal users, attenuating the gradient on $\mu$ and preventing the regression-to-the-mean that traps deterministic models.
The collapse rate drops from $\alpha^{\text{det}}=0.20$ to $\alpha^{\text{prob}}=0$ on LaMP-4, and to $0$ on every other task as well.

\paragraph{When does query-level hacking occur?}
Figure~\ref{fig:failure_mode_lamp4}~(c) plots per-query Det vs Prob correlations. The green points (upper-left quadrant) are queries where Det produces negative correlation but Prob produces positive correlation, accounting for $28\%$ of all LaMP-4 queries; combined with cases where both are positive (blue), Prob's negative-correlation rate drops from $\beta^{\text{det}}=0.33$ to $\beta^{\text{prob}}=0.08$.
Figure~\ref{fig:failure_mode_lamp4}~(d) explains why: Prob RM's learned $\sigma^{2}$ rises with prediction error (Spearman $r\!=\!0.73$), so during training the loss naturally down-weights queries it cannot fit.
This implicit regularization (Lemma~\ref{lem:implicit_regularization}) prevents Det RM's overfitting to spurious patterns on hard queries, which is what produces negative correlation in the first place.

%---- END sections/6_experiments.tex ----

%---- BEGIN sections/7_related_work.tex ----
\section{Related Work}
\label{sec:related}

\textbf{LLM Personalization.}
Existing approaches to LLM personalization fall into three families.
\emph{Prompting-based methods} augment the input context with user information: LaMP~\citep{salemi-etal-2024-lamp} retrieves user history via dense retrieval, while PAG~\citep{richardson2023integrating} first generates user profiles with LLMs to enrich personalization.
\emph{Adaptation-based methods} fine-tune model parameters per user: OPPU~\citep{tan-etal-2024-democratizing} introduces per-user LoRA adapters, and FDLoRA~\citep{qi2024fdlora} improves efficiency through federated learning.
\emph{Alignment-based methods} formulate personalization as multi-objective optimization: Rewarded Soup~\citep{rame2023rewarded} interpolates separately trained policy weights to balance preference dimensions.
All three modify what the model sees or how it is trained; we instead scale the policy output space at inference time and study when selection-based scaling is reliable.

\textbf{Uncertainty in Reward Modeling.}
Uncertainty quantification has been used to mitigate reward hacking in general alignment.
Ensemble approaches train multiple reward models and use disagreement as the uncertainty signal~\citep{lou2025uncertainty}.
UP-RLHF~\citep{ZHAI2026104548} trains diverse LoRA ensembles with uncertainty-penalized optimization, while PURM~\citep{sun2025probabilistic} extends Bradley--Terry to a distributional reward.
A common feature of these works is that uncertainty is consumed at \emph{inference time}, e.g., as a PPO penalty or a selection adjustment.
Our use of the variance head is structurally different: $\sigma^{2}$ acts purely at \emph{training time} via gradient buffering and implicit regularization (Lemmas~\ref{lem:gradient_buffering}, \ref{lem:implicit_regularization}); inference uses only the predicted mean.
This shift exposes a personalization-specific role for probabilistic modeling that prior alignment-focused work does not address.

\textbf{Test-Time Scaling for Personalization.}
Recent work has shown that scaling test-time compute can improve LLM alignment, but the available compute can be allocated to several distinct targets.
Three are particularly relevant for personalization:
\emph{(i) scaling user interactions}: T-POP~\citep{qu2025tpop} learns per-user preferences online via dueling bandits, requiring repeated preference queries;
\emph{(ii) scaling reward-model reasoning}: generative reward models unroll chain-of-thought tokens inside the reward model itself, trading inference cost for ranking accuracy;
and \emph{(iii) scaling policy outputs}: sampling $N$ candidates from a personalized policy and selecting the best with a per-user reward model, the axis we study here.
The three axes are complementary rather than competing; our diagnostic framework (Section~\ref{sec:theory_2}) applies to any reward model used in axis (iii) and could be combined with (i) or (ii).
Training-free LLM-as-judge methods, such as persona-conditioned pairwise tournaments, instantiate axis (ii) with a general-purpose LLM as the personalized reward model, but achieve only coin-flip accuracy on per-user style matching in our setting (Appendix~\ref{app:synthesizeme}), confirming that per-user trained reward models remain necessary for stylistic personalization.
TPO~\citep{li2025test} iteratively refines outputs through textual critiques but does not target personalization.

%---- END sections/7_related_work.tex ----

%---- BEGIN sections/8_dicussion.tex ----
\section{Conclusion and Discussion}

\textbf{Summary.}
We introduced \textbf{Test-Time Personalization} (TTP), scaling policy outputs at inference for personalized text generation.
Our central contribution is a \emph{predictive scaling law} that links four measurable reward-model properties to the Best-of-$N$ utility curve, validated empirically at relMAE $<$ 3\%.
The law identifies two failure modes (user-level collapse and query-level reward hacking), and we prove that probabilistic reward modeling overcomes both via training-time gradient buffering and implicit regularization.
The resulting probabilistic User RM is the only method whose Best-of-$N$ curve scales monotonically across all $15$ (task, policy) cells we tested, confirming TTP as a policy-orthogonal amplifier.

\textbf{Limitations.}
Our evaluation focuses on \emph{personalized text generation} with publicly available benchmarks (LaMP, LongLaMP) using ROUGE-derived ground-truth rewards.
While we show robustness to alternative metrics (BERTScore and an LLM-as-judge, Appendix~\ref{app:alternative_metrics}), generalization beyond text generation (e.g., dialogue, agentic task) remains untested.
The framework also assumes per-user data sufficient to train a small probabilistic reward model; cold-start scenarios fall outside the regime we study and are better addressed by orthogonal axes such as scaling user interactions (Section~\ref{sec:related}).
Finally, training separate reward models per user may pose deployment challenges at scale, motivating future work on parameter-efficient cross-user sharing.

\textbf{Future Directions.}
The diagnostic framework opens several avenues.
First, the three test-time axes (user interactions, reward-model reasoning, policy outputs) are complementary; combining probabilistic per-user RMs with generative reward modeling is a natural next step.
Second, extending probabilistic reward modeling to non-text modalities would test whether the gradient-buffering and implicit-regularization mechanisms transfer to settings with richer label structure.
%---- END sections/8_dicussion.tex ----

%%%%%%%%%%%%%%%%%%%%%%%%%%%%%%%%
% ACKNOWLEDGMENTS (hidden in submission via the ack environment)
%%%%%%%%%%%%%%%%%%%%%%%%%%%%%%%%
\begin{ack}
% Funding and competing-interest disclosure goes here for the
% camera-ready version. Hidden during anonymous submission.
\end{ack}

%%%%%%%%%%%%%%%%%%%%%%%%%%%%%%%%
% REFERENCES
%%%%%%%%%%%%%%%%%%%%%%%%%%%%%%%%
\bibliographystyle{unsrtnat}
\bibliography{example_paper}

%%%%%%%%%%%%%%%%%%%%%%%%%%%%%%%%
% APPENDIX
%%%%%%%%%%%%%%%%%%%%%%%%%%%%%%%%
\newpage
\appendix

\setcounter{table}{0}
\renewcommand{\thetable}{A\arabic{table}}
\setcounter{figure}{0}
\renewcommand{\thefigure}{A\arabic{figure}}

%---- BEGIN sections/appendix.tex ----
\section{Theoretical Proofs}
\label{app:proofs}

In this section, we provide formal proofs or theoretical analysis for all theoretical results presented in the main text. 
Our theoretical framework rests on a chain of results that progressively characterizes test-time personalization:

\begin{enumerate}
    \item \textbf{Theorem~\ref{thm:oracle_scaling} (Oracle Scaling Law, Section~\ref{app:proof_oracle})}: Establishes the theoretical ceiling for TTP, expected utility grows as $O(\sqrt{\ln N})$ with oracle selection.
    
    \item \textbf{Lemma~\ref{lem:corr_scaling} (Correlation-Scaling Relationship, Section~\ref{app:proof_correlation})}: Shows that reward model correlation directly determines scaling behavior, providing a diagnostic tool for analyzing RM quality.
    
    \item \textbf{Proposition~\ref{prop:unified_law} (Unified Scaling Law, Section~\ref{app:unified_law})}: Derives how the two failure modes, collapse rate $\alpha$ and hacking rate $\beta$, jointly determine population-level scaling.
    
    \item \textbf{Lemmas~\ref{lem:gradient_buffering} and \ref{lem:implicit_regularization} (Gradient Buffering \& Implicit Regularization, Section~\ref{app:proof_lemma51_52})}: Explains the mechanisms by which probabilistic reward modeling reduces both failure modes.
\end{enumerate}

Throughout the proofs, we introduce necessary assumptions and provide remarks connecting theoretical insights to empirical observations. Table~\ref{tab:assumptions_summary} summarizes the key assumptions used in our analysis.

\begin{table}[h]
\centering
\caption{Summary of assumptions used in theoretical analysis.}
\label{tab:assumptions_summary}
\begin{tabular}{lll}
\toprule
\textbf{Assumption} & \textbf{Used In} & \textbf{Description} \\
\midrule
Sub-Gaussian rewards & Theorem~\ref{thm:oracle_scaling} & Reward distributions have light tails \\
Correlation uniformity & Lemma~\ref{lem:corr_scaling} & Local correlation $\approx$ global correlation \\
Collapse absorbs hacking & Proposition~\ref{prop:unified_law} & $\beta$ measured on non-collapsed users \\
Variance homogeneity & Proposition~\ref{prop:unified_law} & $\sigma$ variation is second-order \\
\bottomrule
\end{tabular}
\end{table}

%==============================================================================

%==============================================================================

\subsection{Proof of Theorem 3.1}
\label{app:proof_oracle}

We first introduce the sub-Gaussian property and a key lemma on the maximum of random variables, then prove Theorem 3.1.

\begin{definition}[Sub-Gaussian Random Variable]
\label{def:subgaussian}
A random variable $X$ with mean $\mu = \mathbb{E}[X]$ is $\sigma$-sub-Gaussian if:
\begin{equation}
    \mathbb{E}\left[\exp\left(\lambda(X - \mu)\right)\right] \leq \exp\left(\frac{\lambda^2 \sigma^2}{2}\right), \quad \forall \lambda \in \mathbb{R}
\end{equation}
where $\sigma > 0$ is the sub-Gaussian parameter. This assumption covers a wide range of bounded or light-tailed distributions, including typical LLM quality scores.
\end{definition}

\begin{lemma}[Upper Bound on Expected Maximum]
\label{lemma:max_subgaussian}
Let $X_1, \ldots, X_N$ be i.i.d.\ $\sigma$-sub-Gaussian random variables with mean $\mu$. Then the expected maximum is bounded by:
\begin{equation}
    \mathbb{E}\left[\max_{i \in [N]} X_i\right] \leq \mu + \sigma\sqrt{2 \ln N}
\end{equation}
\end{lemma}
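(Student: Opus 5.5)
The plan is the standard moment-generating-function (Jensen plus union bound) argument. Write $Y_i = X_i - \mu$, so each $Y_i$ is centered and satisfies $\mathbb{E}[e^{\lambda Y_i}] \le e^{\lambda^2\sigma^2/2}$ for all $\lambda \in \mathbb{R}$ by Definition~\ref{def:subgaussian}. It then suffices to show $\mathbb{E}[\max_i Y_i] \le \sigma\sqrt{2\ln N}$, since $\max_i X_i = \mu + \max_i Y_i$. Independence is not actually needed; only the marginal sub-Gaussian bound is used.

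Fix any $\lambda > 0$. Since $t \mapsto e^{\lambda t}$ is convex, Jensen's inequality gives
\begin{equation}
    \exp\Bigl(\lambda\, \mathbb{E}\bigl[\max_{i} Y_i\bigr]\Bigr) \le \mathbb{E}\Bigl[\exp\bigl(\lambda \max_i Y_i\bigr)\Bigr] = \mathbb{E}\Bigl[\max_i e^{\lambda Y_i}\Bigr] \le \sum_{i=1}^N \mathbb{E}\bigl[e^{\lambda Y_i}\bigr] \le N e^{\lambda^2\sigma^2/2}.
\end{equation}
The equality uses that $e^{\lambda t}$ is monotone increasing for $\lambda>0$. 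The middle inequality bounds a maximum of nonnegative terms by their sum. Taking logarithms and dividing by $\lambda$ gives
\begin{equation}
    \mathbb{E}\bigl[\max_i Y_i\bigr] \le \frac{\ln N}{\lambda} + \frac{\lambda \sigma^2}{2}.
\end{equation}
This holds for every $\lambda > 0$. The right-hand side is minimized at $\lambda = \sqrt{2\ln N}/\sigma$, which is positive for $N \ge 2$, and there equals $\sigma\sqrt{2\ln N}$. Adding back $\mu$ yields the claim.

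No step is a real obstacle; this is routine. The only points to watch are the edge case $N=1$ and the choice of $\lambda$. For $N=1$, $\ln N = 0$, so the optimal $\lambda$ would be $0$, which is not allowed; instead the bound follows directly, since $\mathbb{E}[X_1] = \mu$. The choice of $\lambda$ must be positive so that the monotonicity step is valid. This lemma then feeds Theorem~\ref{thm:oracle_scaling} with $c = \sqrt{2}$: apply it per user with $X_i = r_u^*(q,x_i)$ conditional on $q$, take expectation over $q$, then average over $u$ using linearity of expectation to obtain $\bar{\mu} + \bar{\sigma}\sqrt{2\ln N}$.
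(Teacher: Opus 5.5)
Your proposal is correct and follows the paper's proof essentially step for step: Jensen on $e^{\lambda t}$, monotonicity to pull the maximum inside the exponential, bounding the maximum by the sum, applying the sub-Gaussian MGF bound, and optimizing at $\lambda = \sqrt{2\ln N}/\sigma$. Two small refinements go beyond the paper: you handle $N=1$ separately (where that choice of $\lambda$ degenerates to $0$, a case the paper leaves implicit), and you note that independence is not needed.
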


\begin{proof}
For any $\lambda > 0$, using Jensen's inequality and the monotonicity of the exponential function:
\begin{equation}
    \exp\left(\lambda \mathbb{E}\left[\max_i (X_i - \mu)\right]\right) \leq \mathbb{E}\left[\exp\left(\lambda \max_i (X_i - \mu)\right)\right] = \mathbb{E}\left[\max_i \exp(\lambda (X_i - \mu))\right]
\end{equation}
Bounding the maximum by the sum:
\begin{equation}
    \mathbb{E}\left[\max_i \exp(\lambda (X_i - \mu))\right] \leq \sum_{i=1}^{N} \mathbb{E}\left[\exp(\lambda (X_i - \mu))\right]
\end{equation}
Applying the sub-Gaussian property (Definition~\ref{def:subgaussian}):
\begin{equation}
    \sum_{i=1}^{N} \mathbb{E}\left[\exp(\lambda (X_i - \mu))\right] \leq N \exp\left(\frac{\lambda^2 \sigma^2}{2}\right)
\end{equation}
Taking logarithms on both sides:
\begin{equation}
    \lambda \mathbb{E}\left[\max_i (X_i - \mu)\right] \leq \ln N + \frac{\lambda^2 \sigma^2}{2} \implies \mathbb{E}\left[\max_i X_i\right] \leq \mu + \frac{\ln N}{\lambda} + \frac{\lambda \sigma^2}{2}
\end{equation}
The right-hand side is minimized when $\lambda = \sqrt{2\ln N}/\sigma$. Substituting this value yields the bound $\mu + \sigma\sqrt{2 \ln N}$.
\end{proof}

\begin{proof}
For a user $u$ with query $q$, assuming the true rewards $\{r^*_u(q, x_i)\}_{i=1}^N$ are i.i.d.\ $\sigma_u$-sub-Gaussian with mean $\mu_u$, we apply Lemma~\ref{lemma:max_subgaussian}:
\begin{equation}
    U_{\text{oracle},u}(N) = \mathbb{E}\left[\max_{i \in [N]} r^*_u(q, x_i)\right] \leq \mu_u + c \cdot \sigma_u \cdot \sqrt{\ln N}
\end{equation}
where $c = \sqrt{2}$ is the theoretical scaling coefficient derived from the sub-Gaussian bound. 

Taking the expectation over the population of users:
\begin{equation}
    \bar{U}_{\text{oracle}}(N) = \mathbb{E}_u[U_{\text{oracle},u}(N)] \leq \mathbb{E}_u[\mu_u] + c\sqrt{\ln N} \cdot \mathbb{E}_u[\sigma_u] = \bar{\mu} + c \bar{\sigma} \sqrt{\ln N}
\end{equation}
Thus, the oracle utility is upper bounded by a logarithmic growth term scaled by the average reward noise $\bar{\sigma}$.
\end{proof}

\paragraph{Remark on Bound Tightness.} 
While Theorem~\ref{thm:oracle_scaling} provides an upper bound with $c=\sqrt{2}$, our empirical observations suggest that the scaling curve tightly follows this boundary. This indicates that the reward distributions in practice are not merely sub-Gaussian but exhibit tail behaviors close to the Gaussian limit (where the bound becomes an asymptotic equality). Consequently, we treat $c$ as an effective constant in our analysis to absorb minor distributional deviations, recognizing that $c \approx \sqrt{2}$ represents the fully saturated theoretical potential.

%==============================================================================

%==============================================================================

\subsection{Proof of Lemma 4.2}
\label{app:proof_correlation}

This lemma establishes how reward model correlation determines TTP scaling behavior. We first present the idealized result under a uniformity assumption, then discuss when this assumption breaks down in practice.

\paragraph{Setup and Notation}
For a user $u$, let $r^* = r^*_u(q, x)$ denote the true reward and $\hat{r} = \hat{r}_u(q, x)$ denote the predicted reward for a query-response pair. We assume both are standardized to have mean $\mu$ and consider their correlation:
\begin{equation}
    \rho_u = \text{Corr}(\hat{r}, r^*) = \frac{\mathbb{E}[(\hat{r} - \mu)( r^* - \mu)]}{\sigma_{\hat{r}} \sigma_{r^*}}
\end{equation}

\begin{assumption}[Correlation Uniformity]
\label{assume:uniformity}
The correlation $\rho_u$ between predicted and true rewards is approximately uniform across the score distribution. Formally, for any subset $\mathcal{S} \subseteq [0, 1]$ of the score range:
\begin{equation}
    \rho_u^{(\mathcal{S})} := \text{Corr}(\hat{r}, r^* \mid r^* \in \mathcal{S}) \approx \rho_u
\end{equation}
\end{assumption}

This assumption is analogous to homoscedasticity in regression: the predictive relationship is stable across different regions of the target distribution.

\begin{lemma}[Correlation-Scaling Relationship, Restated]
Under Assumption~\ref{assume:uniformity}, for a reward model with correlation $\rho_u$, the Best-of-N utility satisfies:
\begin{equation}
    U_u(N) \approx \mu_u + \rho_u \cdot \sigma_u \cdot c\sqrt{\ln N}
\end{equation}
where $c > 0$ is the same constant as in Theorem~\ref{thm:oracle_scaling}.
\end{lemma}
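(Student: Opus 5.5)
The plan is to reduce Best-of-$N$ selection under $\hat{r}_u$ to selection along a noisy linear proxy of $r^*$, and then reuse Theorem~\ref{thm:oracle_scaling}.

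\textbf{Step 1: linear decomposition.} Write the standardized true and predicted rewards as
\begin{equation}
    \frac{r^*-\mu_u}{\sigma_u} = \rho_u Z + \sqrt{1-\rho_u^2}\,W, \qquad \frac{\hat r - \mu_{\hat r}}{\sigma_{\hat r}} = Z,
\end{equation}
where $W$ is the residual of the best linear predictor of the standardized true reward from $Z$. This decomposition always exists, with $W$ uncorrelated with $Z$. The substantive content of Assumption~\ref{assume:uniformity} is stronger: the regression of $r^*$ on $\hat r$ is linear with slope $\rho_u\sigma_u/\sigma_{\hat r}$ throughout the score range. I will therefore read the assumption as
\begin{equation}
    \mathbb{E}[r^*\mid \hat r] \approx \mu_u + \rho_u\frac{\sigma_u}{\sigma_{\hat r}}(\hat r-\mu_{\hat r}),
\end{equation}
which is the bivariate-linearity condition mentioned in the main text.

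\textbf{Step 2: conditioning on the selection.} Let $i^*=\arg\max_i \hat r(q,x_i)$. The selection depends only on $(\hat r_1,\dots,\hat r_N)$, and the pairs are i.i.d. Conditioning on all predicted scores therefore gives
\begin{equation}
    \mathbb{E}[r^*_{i^*}\mid \hat r_{1:N}] = \mathbb{E}[r^*\mid \hat r=\hat r_{i^*}].
\end{equation}
Applying Step 1 and taking expectations yields
\begin{equation}
    U_u(N)\approx \mu_u + \rho_u\,\sigma_u\,\mathbb{E}\Big[\max_i Z_i\Big].
\end{equation}
The residual $W$ contributes nothing here, because selection does not depend on it once we condition on $\hat r$.

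\textbf{Step 3: the maximum term.} The quantity $\mathbb{E}[\max_i Z_i]$ is the expected maximum of $N$ i.i.d. unit-variance scores. Lemma~\ref{lemma:max_subgaussian} bounds it by $\sqrt{2\ln N}$ when $Z$ is sub-Gaussian, and this is asymptotically tight in the Gaussian case. I would then absorb constants into the same effective $c$ used in Theorem~\ref{thm:oracle_scaling}, following the Remark on Bound Tightness. This gives $U_u(N)\approx \mu_u+\rho_u\sigma_u c\sqrt{\ln N}$. The sanity checks behave correctly: $\rho_u=1$ recovers the oracle, $\rho_u=0$ gives random selection at $\mu_u$, and $\rho_u<0$ gives a decreasing curve.

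\textbf{Main obstacle.} The hard part is justifying Step 1's linearity, together with using the same $c$ for $Z$ as for $r^*$. Correlation alone does not control the tail of the selection, where the argmax lives. Without linearity, $\mathbb{E}[r^*\mid\hat r]$ could be nonlinear in the extremes and reverse the ordering there. Assumption~\ref{assume:uniformity} is exactly what rules this out, and I will invoke it explicitly at that point. For the constant, I would either assume $\hat r$ and $r^*$ are jointly Gaussian, where all steps are exact asymptotically, or argue that both maxima have the same leading-order $\sqrt{2\ln N}$ behavior. In either case the statement holds only as an approximation ($\approx$), not as an inequality.
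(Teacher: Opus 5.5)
Your proposal is correct and follows essentially the same route as the paper's proof. The paper likewise posits a linear conditional expectation $\mathbb{E}[r^*\mid\hat r]=\mu+\rho_u(\sigma_{r^*}/\sigma_{\hat r})(\hat r-\mu)$ (via joint normality), pushes it through the argmax, and treats the bound of Lemma~\ref{lemma:max_subgaussian} for $\max_i \hat r(x_i)$ as an equality with the same constant $c$; your explicit conditioning on $\hat r_{1:N}$, your separation of $\mu_{\hat r}$ from $\mu_u$, and your flagging of the shared-$c$ issue just make precise steps the paper takes for granted.
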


\begin{proof}
We assume the joint distribution of predicted and true rewards follows a Bivariate Normal distribution (or satisfies the property of linear conditional expectation). 
Under joint normality (or linear regression structure), the conditional expectation of true reward given predicted reward is:
\begin{equation}
    \mathbb{E}[r^* \mid \hat{r}] = \mu + \rho_u \cdot \frac{\sigma_{r^*}}{\sigma_{\hat{r}}} (\hat{r} - \mu)
\end{equation}

Let $\hat{x}^* = \arg\max_{i \in [N]} \hat{r}(x_i)$ be the candidate selected by the reward model. The expected true reward of this selection is:
\begin{equation}
    \mathbb{E}[r^*(\hat{x}^*)] = \mathbb{E}\left[\mathbb{E}[r^* \mid \hat{r}(\hat{x}^*)]\right] = \mu + \rho_u \cdot \frac{\sigma_{r^*}}{\sigma_{\hat{r}}} \cdot \mathbb{E}[\hat{r}(\hat{x}^*) - \mu]
\end{equation}

Since $\hat{r}(x_i)$ are i.i.d.\ sub-Gaussian, by Lemma~\ref{lemma:max_subgaussian}:
\begin{equation}
    \mathbb{E}[\hat{r}(\hat{x}^*)] = \mathbb{E}\left[\max_{i \in [N]} \hat{r}(x_i)\right] = \mu + c \cdot \sigma_{\hat{r}} \cdot \sqrt{\ln N}
\end{equation}

Substituting into the previous equation:
\begin{equation}
    U_u(N) = \mathbb{E}[r^*(\hat{x}^*)] = \mu + \rho_u \cdot \frac{\sigma_{r^*}}{\sigma_{\hat{r}}} \cdot c \cdot \sigma_{\hat{r}} \cdot \sqrt{\ln N} = \mu_u + \rho_u \cdot \sigma_u \cdot c\sqrt{\ln N}
\end{equation}
where $\sigma_u := \sigma_{r^*}$ for notational consistency.
\end{proof}

\paragraph{Remark.} 
In practice, Assumption~\ref{assume:uniformity} often does not hold exactly. High-score regions are typically harder to model due to data sparsity, subtle quality distinctions, and regression to the mean, leading to $\rho_u^{(\text{high})} < \rho_u^{(\text{overall})}$. Since Best-of-N selection operates primarily in the right tail of the score distribution, the effective correlation for TTP is lower than the overall correlation. This explains why in our main experiments (Figure~\ref{fig:main_results}), even when probabilistic RM achieves high user-level correlation on LongLaMP tasks, a gap remains between TTP performance and the oracle upper bound.

%==============================================================================

%==============================================================================

\subsection{Proof of Proposition 4.4}
\label{app:unified_law}

This proposition establishes how the two failure modes, user-level collapse and query-level reward hacking, jointly determine TTP scaling behavior.

\paragraph{Definitions and Assumptions.}

We first formalize the failure rates introduced in Definition~\ref{def:failure_rates}:

\begin{definition}[Failure Rates, Restated]
For a reward modeling approach:
\begin{itemize}
    \item \textbf{Collapse rate} $\alpha$: Fraction of users with per-user correlation $\rho_u < \tau_c$, where $\tau_c = 0.1$ is the collapse threshold.
    \item \textbf{Reward hacking rate} $\beta$: Fraction of queries (among non-collapsed users) with per-query correlation $\rho_q < 0$.
\end{itemize}
\end{definition}

\begin{assumption}[\textbf{Collapse Absorbs Hacking}]
\label{assume:collapse_absorbs}
For collapsed users ($\rho_u < \tau_c$), the reward model outputs near-constant predictions, rendering query-level correlation undefined or negligible. Thus, the hacking rate $\beta$ is measured only among non-collapsed users.
\end{assumption}
This assumption reflects the intuition that collapse represents a more severe failure mode: when a reward model cannot discriminate between candidates at all, the notion of ``selecting worse candidates'' (hacking) becomes moot.

\begin{assumption}[\textbf{Variance Homogeneity}]
\label{assume:variance_homogeneity}
The variance $\sigma_u$ of true rewards is approximately constant across users and query types. Variations in $\sigma$ across collapsed/non-collapsed users or hacked/non-hacked queries are treated as second-order effects.
\end{assumption}

\begin{proposition}[\textbf{Unified Scaling Law}, Restated]
\label{prop:unified_scaling}
Under Assumptions~\ref{assume:uniformity}, \ref{assume:collapse_absorbs}, and \ref{assume:variance_homogeneity}, the population-level Best-of-N utility can be approximated as:
\begin{equation}
    \bar{U}(N) \approx \bar{\mu} + (1 - \alpha) \cdot \left[(1 - \beta)\bar{\rho}_+ - \beta|\bar{\rho}_-|\right] \cdot \bar{\sigma} \cdot c\sqrt{\ln N}
\end{equation}
where $\bar{\rho}_+ > 0$ is the mean correlation on non-hacked queries and $\bar{\rho}_- < 0$ is the mean correlation on hacked queries (both measured among non-collapsed users).
\end{proposition}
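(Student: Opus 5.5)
The plan is to apply Lemma~\ref{lem:corr_scaling} at the finest granularity where Assumption~\ref{assume:uniformity} can be invoked, namely a single query, and then aggregate twice: first over queries within a user, then over users. For a fixed non-collapsed user $u$ and query $q$, the $N$ candidates are i.i.d.\ draws from $\pi_u(\cdot\mid q)$. I would rerun the argument of Lemma~\ref{lem:corr_scaling} conditionally on $q$, using the linear conditional expectation of $r^*$ given $\hat r$ within that query with coefficient $\rho_q$. This gives
\begin{equation}
U_{u,q}(N) \approx \mu_{u,q} + \rho_q\, \sigma_{u,q}\, c\sqrt{\ln N}.
\end{equation}
The key point is that the derivation never uses $\rho_q>0$. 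For a hacked query, the selected candidate's expected true reward therefore falls below the mean by $|\rho_q|\sigma_{u,q}c\sqrt{\ln N}$.

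Next I average over queries of user $u$. By Assumption~\ref{assume:variance_homogeneity}, $\sigma_{u,q}\approx\bar\sigma$, so the increment is $\bar\sigma c\sqrt{\ln N}$ times the average of $\rho_q$. I split the queries into the non-hacked set, with fraction $1-\beta$ and mean correlation $\bar\rho_+$, and the hacked set, with fraction $\beta$ and mean $\bar\rho_-=-|\bar\rho_-|$. The average then equals exactly $(1-\beta)\bar\rho_+-\beta|\bar\rho_-|$.

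Strictly, $\beta$, $\bar\rho_\pm$ are pooled over all non-collapsed users' queries, not per user. So I would do the split once at the level of the pooled query population of non-collapsed users, rather than per user. This is legitimate once $\sigma$ has been factored out as a constant, because averaging the per-query expression over users and then queries is just a weighted average of $\rho_q$. With equal query weights per user, or weighting by queries, this is exactly the pooled mean.

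For collapsed users, Assumption~\ref{assume:collapse_absorbs} says predictions are near-constant. Selection is then effectively random among candidates (ties or noise independent of $r^*$), so $U_{u}(N)\approx\mu_u$ with zero increment. I would justify this via Lemma~\ref{lem:corr_scaling} with $\rho\approx 0$, treating the residual $\rho_u<\tau_c$ as second order. Averaging over users with weights $\alpha$ and $1-\alpha$ gives
\begin{equation}
\bar\mu + (1-\alpha)\bigl[(1-\beta)\bar\rho_+-\beta|\bar\rho_-|\bigr]\bar\sigma c\sqrt{\ln N},
\end{equation}
where $\bar\mu=\mathbb{E}_u[\mu_u]$ by linearity.

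I expect the main obstacle to be the per-query step and the interchange of averages. Lemma~\ref{lem:corr_scaling} was stated with a per-user $\rho_u$, and using $\rho_q$ requires uniformity to hold within each query's candidate pool, with $\hat r$ sub-Gaussian per query. I would state this explicitly as the query-level reading of Assumption~\ref{assume:uniformity}.

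A second issue is that averaging $\rho_q\sigma_{u,q}$ factors into $\bar\rho\bar\sigma$ only by neglecting covariance between correlation and variance. That is precisely what Assumption~\ref{assume:variance_homogeneity} licenses, so I would record the neglected term $\mathrm{Cov}(\rho_q,\sigma_{u,q})c\sqrt{\ln N}$ as the approximation error. Similarly, treating collapsed users as contributing exactly zero discards a term of size at most $\tau_c\bar\sigma c\sqrt{\ln N}$ per collapsed user, which I would note as the error of the collapse approximation.
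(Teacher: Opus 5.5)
Your proposal matches the paper's proof: the same two-level split (collapsed vs.\ non-collapsed users, then hacked vs.\ non-hacked queries), with Lemma~\ref{lem:corr_scaling} applied to each piece and $\sigma$ factored out via Assumption~\ref{assume:variance_homogeneity}. Applying the lemma per query and then averaging $\rho_q$ exactly is a more explicit justification of the paper's step of applying it to each partition with the mean correlation $\bar\rho_\pm$.

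One small correction to your error bookkeeping. Near-constant predictions do not by themselves make selection random, because the argmax is scale-invariant, so the zero-increment step for collapsed users really rests on $\rho_u\approx 0$. Also, collapse only means $\rho_u<\tau_c$, so the discarded term is bounded above by $\tau_c\bar\sigma c\sqrt{\ln N}$ but not in magnitude. The paper's remark on the collapsed-user assumption in fact measures a slightly negative $\bar\rho_{\text{collapsed}}$.
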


\begin{proof}
We derive the result through a two-level decomposition. Partition users into collapsed (fraction $\alpha$) and non-collapsed (fraction $1-\alpha$):
\begin{equation}
    \bar{U}(N) = \alpha \cdot \bar{U}_{\text{collapsed}}(N) + (1 - \alpha) \cdot \bar{U}_{\text{non-collapsed}}(N)
\end{equation}

For collapsed users, $\rho_u \approx 0$, so by Lemma~\ref{lem:corr_scaling}, $\bar{U}_{\text{collapsed}}(N) \approx \bar{\mu}$.

Among non-collapsed users, partition queries into hacked (fraction $\beta$) and non-hacked (fraction $1-\beta$). Applying Lemma~\ref{lem:corr_scaling} to each partition:
\begin{align}
    \bar{U}_{\text{non-hacked}}(N) &\approx \bar{\mu} + \bar{\rho}_+ \cdot \bar{\sigma} \cdot c\sqrt{\ln N} \\
    \bar{U}_{\text{hacked}}(N) &\approx \bar{\mu} + \bar{\rho}_- \cdot \bar{\sigma} \cdot c\sqrt{\ln N}
\end{align}

Combining the query-level results:
\begin{equation}
    \bar{U}_{\text{non-collapsed}}(N) = \bar{\mu} + \left[(1 - \beta)\bar{\rho}_+ - \beta|\bar{\rho}_-|\right] \cdot \bar{\sigma} \cdot c\sqrt{\ln N}
\end{equation}

Substituting back into the user-level decomposition:
\begin{equation}
    \bar{U}(N) = \bar{\mu} + (1 - \alpha) \cdot \left[(1 - \beta)\bar{\rho}_+ - \beta|\bar{\rho}_-|\right] \cdot \bar{\sigma} \cdot c\sqrt{\ln N}
\end{equation}
\end{proof}

\paragraph{Remark on the trade-off.}
Proposition~\ref{prop:unified_scaling} reveals a fundamental trade-off: Global RM achieves stability (low $\alpha$, low $\beta$) but weak personalization (low $\bar{\rho}_+$); deterministic User RM achieves high $\bar{\rho}_+$ but suffers from instability (high $\alpha$ or $\beta$); probabilistic User RM achieves both high correlation and stability through gradient buffering and implicit regularization (Lemmas~\ref{lem:gradient_buffering} and \ref{lem:implicit_regularization}), enabling effective TTP. The unified scaling law thus provides a diagnostic framework for predicting TTP performance by measuring $\alpha$, $\beta$, $\bar{\rho}_+$, and $\bar{\rho}_-$.

\paragraph{Remark on the collapsed-user assumption.}
Assumption~\ref{assume:collapse_absorbs} treats the contribution of collapsed users as negligible because their per-user correlation is approximately zero. In our data, however, collapsed users tend to have a small but non-zero negative correlation (e.g., $\bar{\rho}_{\text{collapsed}} \approx -0.027$ on LaMP-4 and $-0.030$ on LongLaMP-Product Review). Replacing the $\rho_u \approx 0$ assumption with the empirically measured $\bar{\rho}_{\text{collapsed}}$ yields a refined formula
\begin{equation}
    \bar{U}(N) \approx \bar{\mu} + \big[(1-\alpha)\rho_{\text{eff}} + \alpha \bar{\rho}_{\text{collapsed}}\big] \cdot \bar{\sigma} \cdot c\sqrt{\ln N}.
\end{equation}
The refinement is small in magnitude (improving relMAE on LaMP-4 Det RM from $9.1\%$ to $8.8\%$, similarly on the other tasks where collapsed users are most prevalent) and \emph{strengthens} the main conclusion: collapsed users contribute slightly \emph{negative} scaling, so probabilistic modeling, which eliminates collapse altogether ($\alpha^{\text{prob}}{=}0$), is even more important than the original analysis suggests.

%==============================================================================

%==============================================================================

\subsection{Theoretical Analysis of Lemma 5.1 and Lemma 5.2}
\label{app:proof_lemma51_52}

In this section, we provide theoretical analysis of how probabilistic reward modeling addresses the two failure modes identified in Section~\ref{sec:theory_2}. 
Both mechanisms operate during training by leveraging the learned variance $\sigma^2$ to modulate gradient flow and sample weighting.

\paragraph{Preliminaries: NLL vs MSE Training.}

Consider a reward model that predicts mean $\mu(x)$ and variance $\sigma^2(x)$ for input $x$ with ground-truth label $y$. The two training objectives are:

\textbf{MSE Loss:}
\begin{equation}
    \mathcal{L}_{\text{MSE}} = (y - \mu)^2
\end{equation}

\textbf{Gaussian NLL Loss:}
\begin{equation}
    \mathcal{L}_{\text{NLL}} = \frac{1}{2}\log \sigma^2 + \frac{(y - \mu)^2}{2\sigma^2}
\end{equation}

The key difference lies in their gradients with respect to $\mu$:
\begin{align}
    \frac{\partial \mathcal{L}_{\text{MSE}}}{\partial \mu} &= -2(y - \mu) \\
    \frac{\partial \mathcal{L}_{\text{NLL}}}{\partial \mu} &= \frac{\mu - y}{\sigma^2}
\end{align}

Under NLL training, the gradient magnitude is scaled by $1/\sigma^2$, allowing the model to modulate learning signals through the variance prediction.

\paragraph{Theoretical Analysis of Lemma 5.1.}
\label{app:collapse_analysis}

\begin{lemma}[\textbf{Gradient Buffering}, Restated]
\label{lem:gradient_buffering_proof}
Under NLL training, the effective gradient on $\mu$ scales as $O(1/\sigma^2)$. For users with narrow score distributions where training signals provide weak discriminative information, the model learns to increase $\sigma^2$, thereby attenuating gradient magnitudes and preventing the collapse that occurs in deterministic models trained with MSE.
\end{lemma}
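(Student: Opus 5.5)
The plan is to argue in two stages. The first stage is an exact gradient computation. The second is a stylized model of a ``narrow'' user, in which the deterministic (MSE) model has a constant-prediction stationary point and NLL training both raises $\sigma^2$ and thereby weakens the pull toward that point.

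\textbf{Stage 1: the $O(1/\sigma^2)$ scaling.} From the preliminaries, $\partial\mathcal{L}_{\text{NLL}}/\partial\mu=(\mu-y)/\sigma^2$. Compared with $\partial\mathcal{L}_{\text{MSE}}/\partial\mu=-2(y-\mu)$, the per-example update on $\mu$ is the MSE update rescaled by $1/(2\sigma^2)$. By the chain rule through the shared backbone, this also holds for parameter gradients of $\mathrm{MLP}_\mu$. This part is immediate.

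\textbf{Stage 2: narrow users drive $\sigma^2$ up.} Model a narrow user as labels $y=f(x)+\varepsilon$, where $\mathrm{Var}(f(x))=s^2$ is small and the label noise $\varepsilon$ has variance $\eta^2$ that is not small relative to $s^2$. This is the regime Figure~\ref{fig:failure_mode_lamp4}(a) identifies: low ground-truth variance and poorly predictable labels.

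Minimising $\mathcal{L}_{\text{NLL}}$ over $\sigma^2$ for fixed $\mu$ gives the stationarity condition $\sigma^{2\star}=\mathbb{E}[(y-\mu)^2\mid x]$. Early in training, before $\mu$ has fit the discriminative part $f$, the residuals are of order $s^2+\eta^2$. So $\sigma^2$ rises to at least $\eta^2$ and stays at the level of the unexplained residual, rather than shrinking toward zero as it would on a cleanly predictable user.

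\textbf{Linking this to collapse.} For MSE, the constant predictor $\mu\equiv\bar y$ is a stationary point. When $s^2\ll\eta^2$, gradient descent with finite data and weight decay is attracted to it. The reason is that the noise component of the gradient, of size $\eta$, dominates the signal component, of size $s$, so per-example updates mostly fit noise, and the regularizer pulls $\mu$ toward the mean. This constant-prediction solution is what the paper calls collapse.

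Under NLL, the high-residual (noise-dominated) examples receive a large $\sigma^2(x)$ and therefore reduced weight $1/\sigma^2(x)$. The effective objective on $\mu$ becomes a heteroscedastically weighted least squares problem, $\sum_i (y_i-\mu_i)^2/\sigma_i^2$. Here the weight shifts toward examples where $\sigma^2$ stays smaller, namely those whose deviation is explained by $f$. The resulting signal-to-noise ratio of the $\mu$-gradient is therefore at least that of MSE.

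\textbf{Step to $\alpha^{\text{prob}}<\alpha^{\text{det}}$.} I would define collapse as convergence to $\rho_u<0.1$. I would then show that the set of $(s,\eta)$ for which the weighted gradient still escapes the constant solution strictly contains the MSE set, at least for users whose noise is heteroscedastic. Averaging over the user distribution then gives the strict inequality.

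\textbf{Main obstacle.} The hard step is making the last comparison rigorous. For a purely homoscedastic user, $1/\sigma^2$ is a uniform rescaling, equivalent to a change of learning rate, and buys nothing. Strictness must therefore come from heteroscedasticity across inputs, or from the interaction of the adaptive rescale with weight decay and early stopping, which effectively slows collapse toward $\bar y$.

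I would first try the linear-Gaussian heteroscedastic model. There the weighted-least-squares estimator has provably lower variance than ordinary least squares (Gauss--Markov/Aitken), which gives a higher expected $\rho_u$. I would then state the result as holding under this assumption. The empirical variance--error correlation in Figure~\ref{fig:failure_mode_lamp4}(d) serves as evidence that the assumption is met.
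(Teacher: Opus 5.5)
\paragraph{Comparison with the paper's route.}
Your route differs from the paper's. The paper's argument is a short heuristic. It takes the per-sample stationary point $\sigma^{*2}=(y-\mu)^2$, concludes that the $\mu$-gradient is $O(1/|y-\mu|)$, and asserts that this attenuation keeps $\mu$ off the constant solution. It never compares equilibria and never actually derives $\alpha^{\text{prob}}<\alpha^{\text{det}}$.

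Your remark that a homoscedastic $1/\sigma^2$ only makes the $\mu$-objective a constant multiple of MSE is correct, and it exposes what the paper's argument glosses over. In this setting the point is even stronger, for two reasons. First, under AdamW (scale-invariant steps, decoupled weight decay) a constant multiple is essentially a no-op. Second, with the implementation's clamp $\sigma^2\le 0.5$ the factor $1/(2\sigma^2)$ is never below $1$, so ``attenuation'' can only mean down-weighting relative to other examples. Locating the effect in heteroscedastic reweighting is therefore the right idea.

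\paragraph{The gap.}
The gap is the step meant to carry the strict inequality. Aitken/Gauss--Markov compares OLS with WLS whose weights are the true noise variances $\eta^2(x)$, fixed independently of the noise. NLL-learned weights are neither.

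\emph{The weights do not favour signal near collapse.} By your own stationarity condition, near the collapsed point $\sigma^{2\star}(x)=(f(x)-\bar y)^2+\eta^2(x)$. So the inputs carrying the most signal are down-weighted exactly like noisy ones. Your claim that weight shifts toward examples ``whose deviation is explained by $f$'' only becomes true after $\mu$ has fit $f$, which is after the escape you are trying to prove. It is harmless only to leading order when $s\ll\eta$.

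\emph{The weights can track the noise itself.} Nothing stops a flexible variance head (an MLP on a shared LoRA backbone) from approaching the per-sample optimum $\sigma^2(x_i)=(y_i-\mu(x_i))^2$, which is precisely the paper's equilibrium. Then example $i$ contributes $-\mathrm{sign}(y_i-\mu_i)/|y_i-\mu_i|$ to the $\mu$-gradient. That sum is dominated by the smallest realized residuals and has infinite variance under Gaussian noise, finite only through the $10^{-4}$ clamp. This is less efficient than OLS, not more.

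You therefore need an explicit additional hypothesis, and your result holds only under it. An example would be a correctly specified, low-capacity variance model under which the joint MLE is asymptotically equivalent to feasible GLS.

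\paragraph{Further points.}
Even granting that hypothesis, a Loewner ordering of estimator covariances does not by itself give a smaller $\Pr(\rho_u<0.1)$. The set $\{\rho_u\ge 0.1\}$ is a cone in parameter space, shifted relative to the estimator's mean, not a symmetric convex set. Increasing variance in some directions can raise the probability of landing in such a set, so this step needs its own argument.

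Finally, the constant predictor is not a stationary point of MSE unless the learned features are empirically uncorrelated with the labels; it is the weight-decay limit. Your ``escape set strictly contains the MSE set'' comparison should therefore be phrased in terms of effective shrinkage, not a basin of attraction.
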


\begin{proof}
Consider a user $u$ whose ground-truth reward distribution has low variance (narrow distribution with small $\Delta_y := y_{\max} - y_{\min}$).

Under MSE training, when $\Delta_y$ is small and inputs $x_i$ are diverse, the model faces conflicting gradients: different inputs with similar labels pull $\mu$ toward the same value. The equilibrium is a near-constant prediction $\mu_{\text{MSE}}(x) \approx \bar{y}$, constituting user-level collapse.

Under NLL training, the model jointly optimizes $\mu$ and $\sigma^2$. The gradient with respect to $\sigma^2$ is:
\begin{equation}
    \frac{\partial \mathcal{L}_{\text{NLL}}}{\partial \sigma^2} = \frac{1}{2\sigma^2} - \frac{(y - \mu)^2}{2\sigma^4}
\end{equation}

Setting this to zero yields the per-sample equilibrium $\sigma^{*2} = (y - \mu)^2$. When the model cannot accurately predict $\mu$, the residual $(y - \mu)^2$ remains large, and $\sigma^2$ increases to match this residual. The gradient on $\mu$ then becomes:
\begin{equation}
    \frac{\partial \mathcal{L}_{\text{NLL}}}{\partial \mu} = \frac{\mu - y}{\sigma^2} = O(1/|y - \mu|)
\end{equation}

As $\sigma^2$ increases, gradient magnitudes on $\mu$ decrease, preventing the model from being forced to fit contradictory signals and allowing $\mu$ to maintain meaningful variation.
\end{proof}

\paragraph{Theoretical Analysis of Lemma 5.2.}
\label{app:hacking_analysis}

\begin{lemma}[\textbf{Implicit Regularization}, Restated]
\label{lem:implicit_reg_proof}
Under NLL training, the learned variance $\sigma^2$ acts as an adaptive sample weight, effectively down-weighting samples with unreliable or inconsistent labels. This implicit regularization prevents overfitting to spurious patterns, resulting in mean predictions $\mu$ that exhibit higher correlation with true rewards and reduced reward hacking.
\end{lemma}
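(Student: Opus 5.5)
We make the statement precise by working with the NLL objective at the level of sample weights and then tracing the effect on per-query correlation. Since the argument is heuristic in the same spirit as the proof of Lemma~\ref{lem:gradient_buffering_proof}, we fix a concrete model. Training samples $(x_i,y_i)$ split into a \emph{consistent} set $\mathcal{C}$ and an \emph{inconsistent} set $\mathcal{I}$. On $\mathcal{C}$, $y_i = f(x_i)+\epsilon_i$ with small noise. On $\mathcal{I}$, the labels carry a spurious component $g(x_i)$ that is anti-correlated with $f$ within a query, or has large noise. Such labels can only be fit by memorizing idiosyncratic features, and that memorization is what produces $\rho_q<0$ at test time.

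\textbf{Step 1 (variance tracks residual).} Take the $\sigma^2$-stationarity condition already derived in the proof of Lemma~\ref{lem:gradient_buffering_proof}, $\sigma^{*2}(x)=\mathbb{E}[(y-\mu(x))^2\mid x]$. Averaged over inputs sharing a representation, it shows that at equilibrium $\sigma^2$ equals the local conditional residual variance. It is therefore large on $\mathcal{I}$ and small on $\mathcal{C}$.

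\textbf{Step 2 (effective weighted regression).} Substitute $\sigma^{*2}$ into $\partial\mathcal{L}_{\text{NLL}}/\partial\mu=(\mu-y)/\sigma^2$. The $\mu$-update becomes a weighted least-squares problem
\[
\min_\mu \sum_i w_i (y_i-\mu(x_i))^2, \qquad w_i \propto 1/\sigma^{*2}(x_i),
\]
whereas MSE uses $w_i\equiv 1$. Let $\pi_{\mathcal{I}}$ denote the effective fraction of total weight placed on $\mathcal{I}$. Under NLL, $\pi_{\mathcal{I}}$ shrinks by roughly the ratio $\sigma^2_{\mathcal{C}}/\sigma^2_{\mathcal{I}}$.

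\textbf{Step 3 (weight to correlation).} In a linear-in-features model, the weighted least-squares solution is $\mu \approx (1-\pi_{\mathcal{I}}) f + \pi_{\mathcal{I}} g$ up to noise. The per-query test correlation $\rho_q$ is then a decreasing function of $\pi_{\mathcal{I}}$ whenever $\mathrm{Corr}(g,r^*)<0$ within the query. A query is hacked ($\rho_q<0$) exactly when its spurious weight exceeds a threshold that depends on the relative magnitudes of $f$ and $g$. Integrating over queries, and using a monotone-coupling argument over the distribution of thresholds, gives $\beta^{\text{prob}}\le\beta^{\text{det}}$. The inequality is strict whenever a positive mass of queries has its threshold between $\pi_{\mathcal{I}}^{\text{prob}}$ and $\pi_{\mathcal{I}}^{\text{det}}$. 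The same computation shows $\bar\rho_+$ does not decrease, which is what Proposition~\ref{prop:unified_law} needs.

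\textbf{Main obstacle.} Step 1 is not a clean equilibrium: the residual depends on $\mu$, and $\mu$ can drive the residual on $\mathcal{I}$ to zero by overfitting. In that case $\sigma^2$ would shrink and the down-weighting would disappear. We would first handle this with a capacity or early-stopping assumption, namely that under the shared low-rank LoRA parameterization $\mathcal{I}$ cannot be memorized faster than $\sigma^2$ adapts. Concretely, we would compare the two timescales in a gradient-flow linearization. If that fails, we would fall back to assuming a lower bound $\sigma^2\ge\sigma^2_{\min}$ on $\mathcal{I}$ coming from irreducible label noise, which alone suffices for Steps 2 and 3. Empirically, this assumption is supported by the $\sigma^2$–error correlation in Figure~\ref{fig:failure_mode_lamp4}(d).
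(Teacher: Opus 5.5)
This is essentially the paper's argument, made more explicit. The paper's justification only reads $\mathcal{L}_{\text{NLL}}$ as an MSE with sample weight $w=1/\sigma^2$ and asserts that on inconsistent samples ``the model cannot achieve low residuals through $\mu$ alone.'' From this it concludes that their $\sigma^2$ stays large and their influence on $\mu$ shrinks, so $\rho_q$ rises and $\beta$ falls. That assertion is exactly your fallback noise floor, taken as a premise. The paper never confronts the circularity you flag, and your mixture model with the threshold argument for $\beta^{\text{prob}}\le\beta^{\text{det}}$ goes beyond what it provides.

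Within your own model you can settle the obstacle without a timescale comparison. Write $\mu=(1-\pi)f+\pi g$. Take labels $f$ plus noise of variance $\epsilon_{\mathcal{C}}^2$ on the $n_{\mathcal{C}}$ consistent samples, and $g$ plus noise of variance $\epsilon_{\mathcal{I}}^2$ on the $n_{\mathcal{I}}$ inconsistent ones, and set $D=\mathbb{E}(f-g)^2$. Profiling out group-level variances gives the objective $\tfrac12\bigl[n_{\mathcal{C}}\log(\pi^2D+\epsilon_{\mathcal{C}}^2)+n_{\mathcal{I}}\log((1-\pi)^2D+\epsilon_{\mathcal{I}}^2)\bigr]$.

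\begin{itemize}
\item If $\epsilon_{\mathcal{I}}=0$, this is unbounded below as $\pi\to1$, so without a floor NLL can reward fitting the spurious group. The floor is therefore necessary, not just convenient.
\item If $\epsilon_{\mathcal{I}}^2>\epsilon_{\mathcal{C}}^2$ and $n_{\mathcal{I}}\le n_{\mathcal{C}}$, the derivative is negative at $\pi=0$ and positive at $\pi^{\text{det}}=n_{\mathcal{I}}/(n_{\mathcal{I}}+n_{\mathcal{C}})$. Hence a local minimizer $\pi^{\text{prob}}<\pi^{\text{det}}$ exists.
\end{itemize}

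The computation also exposes a tacit assumption of Steps 1--2: the variance head must be able to separate $\mathcal{C}$ from $\mathcal{I}$ while the mean cannot fit the two groups separately. If neither can, $\sigma^2$ cannot differ between the groups and $\pi_{\mathcal{I}}$ is unchanged. If both can, the weights do not affect the fit.

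One claim in Step 3 is false. Per-query monotonicity of $\rho_q$ in $\pi_{\mathcal{I}}$ does not make $\bar\rho_+$ non-decreasing. As $\pi_{\mathcal{I}}$ falls, the set of non-hacked queries grows, and the newly admitted queries have $\rho_q$ near $0$. For instance, two queries with $\rho_q=0.90,\,-0.02$ under Det and $0.92,\,0.03$ under Prob give $\bar\rho_+=0.90$ versus $0.475$. What is monotone, and what Proposition~\ref{prop:unified_law} actually uses, is the mean per-query correlation $(1-\beta)\bar\rho_+-\beta|\bar\rho_-|$. That monotonicity follows directly from your per-query argument.
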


\begin{proof}[Justification]
The NLL loss can be decomposed as:
\begin{equation}
    \mathcal{L}_{\text{NLL}} = \frac{1}{2}\log \sigma^2 + \frac{1}{2\sigma^2}(y - \mu)^2
\end{equation}

The second term is a weighted MSE with sample-dependent weight $w = 1/\sigma^2$. For samples where the input-output relationship is inconsistent or noisy, the model cannot achieve low residuals through $\mu$ alone. The optimal response is to increase $\sigma^2$ for these samples, which reduces their contribution to the MSE term while incurring a modest penalty from the $\log \sigma^2$ term (preventing $\sigma^2 \to \infty$).

Spurious patterns arise when the model memorizes incidental correlations that do not generalize. Under NLL training, precisely these noisy samples receive high $\sigma^2$ predictions, reducing their influence on $\mu$. This prevents the model from confidently learning spurious correlations. By down-weighting noisy samples during training, the resulting $\mu$ predictions are more aligned with genuine quality signals, yielding higher correlation $\rho_q$ and lower hacking rate $\beta$.
\end{proof}

\paragraph{Remark.}
A key insight is that the variance head $\sigma^2$ functions as an auxiliary task that improves learning of $\mu$ without directly participating in inference-time decisions. Both gradient buffering and implicit regularization operate entirely during training. At inference time, we select candidates using only $\mu$, but this $\mu$ is more robust than one trained with MSE alone. Notably, using $\sigma^2$ at inference time (e.g., filtering out high-variance predictions) does not further improve performance in our experiments, reinforcing that the primary value of probabilistic modeling lies in training-time regularization.

%==============================================================================

%==============================================================================

\section{Additional Experimental Results}

\subsection{Generalization Across Policy Families}
\label{app:e2_full_table}

The main-text results use a RAG policy. To probe whether TTP gains depend on the choice of generator, we additionally evaluate two other policy families: \emph{Persona prompting} (a user-style description synthesised from $15$ history samples by an external LLM) and \emph{Persona+RAG} (concatenating both). Figure~\ref{fig:main_results_generalization} shows the corresponding scaling curves; Tables~\ref{tab:e2_persona_only} and~\ref{tab:e2_persona_rag} list the numerical Best-of-$N{=}30$ results.

\begin{figure}[h]
    \centering
    \includegraphics[width=\textwidth]{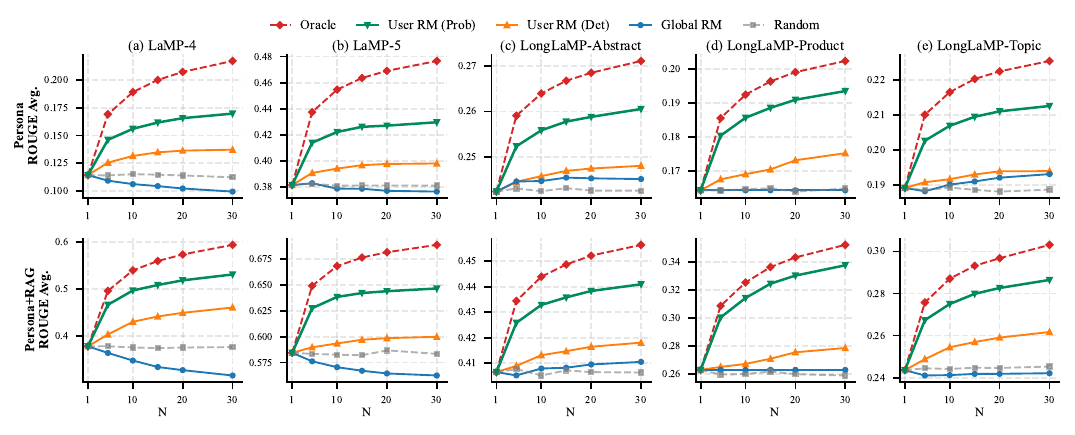}
    \caption{TTP scaling under two additional policy families across all five tasks. Top row: Persona prompting (weak $N{=}1$ baseline). Bottom row: Persona+RAG (strong $N{=}1$ baseline). The selector ranking Probabilistic $>$ Deterministic $>$ Global $\approx$ Random is preserved on every (task, policy) pair, confirming that TTP acts as a policy-orthogonal amplifier whose effect is determined by the reward model rather than the underlying generator.}
    \label{fig:main_results_generalization}
\end{figure}

``Capture'' below denotes $(\text{Prob} - \text{Random})/(\text{Oracle} - \text{Random})$, the fraction of the oracle gap recovered by the probabilistic reward model.

\begin{table}[h]
\centering
\caption{Persona prompting policy: ROUGE at $N{=}30$.}
\label{tab:e2_persona_only}
\begin{tabular}{lcccccc}
\toprule
\textbf{Task} & Random & Global & Det. & \textbf{Prob.} & Oracle & Capture \\
\midrule
LaMP-4   & 0.112 & 0.099 & 0.137 & \textbf{0.170} & 0.218 & 55\% \\
LaMP-5   & 0.381 & 0.376 & 0.398 & \textbf{0.430} & 0.477 & 51\% \\
Abstract & 0.243 & 0.245 & 0.248 & \textbf{0.261} & 0.271 & 63\% \\
Topic    & 0.189 & 0.193 & 0.194 & \textbf{0.213} & 0.226 & 65\% \\
Product  & 0.165 & 0.164 & 0.175 & \textbf{0.194} & 0.203 & 76\% \\
\bottomrule
\end{tabular}
\end{table}

\begin{table}[h]
\centering
\caption{Persona+RAG policy: ROUGE at $N{=}30$.}
\label{tab:e2_persona_rag}
\begin{tabular}{lcccccc}
\toprule
\textbf{Task} & Random & Global & Det. & \textbf{Prob.} & Oracle & Capture \\
\midrule
LaMP-4   & 0.376 & 0.316 & 0.460 & \textbf{0.531} & 0.594 & 71\% \\
LaMP-5   & 0.584 & 0.563 & 0.600 & \textbf{0.647} & 0.689 & 60\% \\
Abstract & 0.406 & 0.411 & 0.418 & \textbf{0.441} & 0.456 & 69\% \\
Topic    & 0.245 & 0.242 & 0.262 & \textbf{0.286} & 0.303 & 71\% \\
Product  & 0.259 & 0.263 & 0.278 & \textbf{0.337} & 0.352 & 84\% \\
\bottomrule
\end{tabular}
\end{table}

The probabilistic User RM is the best non-oracle selector on every (task, policy) pair.
Oracle capture exceeds $50\%$ in all $10$ cells and reaches $84\%$ on Persona+RAG Product Review.
The capture rate is highest for the strongest policy (Persona+RAG), confirming that stronger generators provide more high-quality candidates the reward model can exploit.
Crucially, even on Persona+RAG with $N{=}1$ baselines as high as $0.376$, Global RM still hacks (LaMP-4: $0.376 \to 0.316$), confirming that hacking is a property of the reward model rather than of the candidate quality range.

%==============================================================================

%==============================================================================

\paragraph{TTP with a per-user LoRA training-based policy.}
\label{app:policy_comparison}
We further evaluate TTP on top of a strong training-based personalization method: per-user LoRA fine-tuning of the policy~\cite{tan-etal-2024-democratizing}, which substantially raises the $N{=}1$ baseline.
Figure~\ref{fig:policy_comparison} compares RAG-based and LoRA-based generation on LaMP-4 under the same probabilistic User RM.
The LoRA policy reaches a higher oracle ceiling but its $N{=}1$ baseline is also much higher; the probabilistic User RM continues to scale, with reduced magnitude reflecting distribution shift between the RAG-trained RM and LoRA-generated candidates.
This is consistent with Theorem~\ref{thm:oracle_scaling}, which bounds TTP gain by $\bar{\sigma}\sqrt{\ln N}$: a stronger policy reduces $\bar{\sigma}$ and therefore the achievable headroom, but does not invalidate the scaling mechanism.

\begin{figure}[h]
    \centering
    \includegraphics[width=0.55\textwidth]{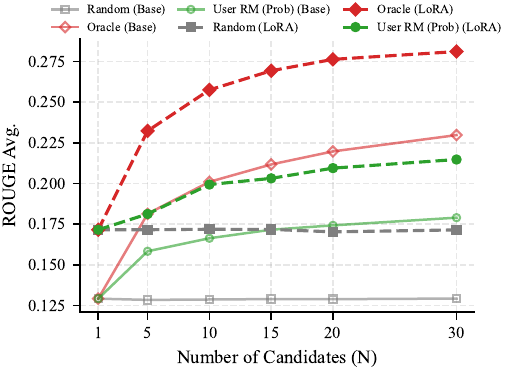}
    \caption{TTP scaling curves comparing RAG-based (Base, dashed) and LoRA-based (solid) policy models on LaMP-4. The LoRA policy reaches a higher oracle ceiling but its baseline is also much higher; the probabilistic User RM continues to scale, with reduced magnitude reflecting distribution shift between the RAG-trained RM and LoRA-generated candidates.}
    \label{fig:policy_comparison}
\end{figure}

%==============================================================================

%==============================================================================

\subsection{Robustness to Evaluation Metric}
\label{app:alternative_metrics}

ROUGE provides a convenient ground-truth proxy but may not fully capture semantic or stylistic preferences.
We replicate the central findings under two alternative metrics.

\paragraph{BERTScore correlates with ROUGE and replicates the trend.}
We compute BERTScore (\texttt{all-MiniLM-L6-v2} embeddings, cosine similarity) for all generated candidates and recompute the per-task Pearson correlation with ROUGE.
Table~\ref{tab:bertscore_correlation} shows moderate-to-high correlation across all five tasks.
On LaMP-4 at $N{=}30$, the BERTScore-rated outputs of selectors are: Random $0.372$, Global $0.348$ ($-5.7\%$), Deterministic $0.362$ ($-2.1\%$), \textbf{Probabilistic $0.392$ ($+6.0\%$)}, ROUGE-Oracle $0.410$.
The ranking Prob $>$ Random $>$ Det $>$ Global is preserved, and Global RM's negative scaling under BERTScore independently confirms reward hacking is not a ROUGE artefact.

\begin{table}[h]
\centering
\caption{BERTScore--ROUGE correlation on candidates pooled across all users and queries.}
\label{tab:bertscore_correlation}
\begin{tabular}{lcc}
\toprule
\textbf{Task} & \textbf{Pearson} & \textbf{Spearman} \\
\midrule
LaMP-4   & 0.636 & 0.611 \\
LaMP-5   & 0.664 & 0.656 \\
Abstract & 0.568 & 0.540 \\
Topic    & 0.415 & 0.406 \\
Product  & 0.448 & 0.447 \\
\bottomrule
\end{tabular}
\end{table}

\paragraph{LLM-as-judge fails for stylistic personalization.}
We additionally evaluated a strong general-purpose LLM (GPT-class, given user-history context) as a pointwise scoring judge ($1$--$10$ scale) on LaMP-4.
The Pearson correlation between judge scores and ROUGE is $r = 0.099$, essentially random.
The ground-truth answer ranks at average position $12.7/30$ (random would be $15.5$) and lands in the top-$5$ only $34.7\%$ of the time.
Under Best-of-$N{=}30$ judge selection, ROUGE is $0.144$, compared to Random $0.134$ and Probabilistic $0.179$.
General-purpose LLMs cannot capture user-level stylistic preferences from limited history, motivating per-user trained reward models.

%==============================================================================

%==============================================================================

%==============================================================================

%==============================================================================

\subsection{Ablation Studies}
\label{app:ablation}

\paragraph{Architectural components.}
We conduct ablation studies on LaMP-4 to understand the contribution of each component in our probabilistic reward model. 
Figure~\ref{fig:ablation} illustrates how each variant scales with the number of candidates $N$. The full model consistently outperforms all ablated variants across all values of $N$, with the performance gap widening as $N$ increases. Notably, the last-pooling variant shows \emph{negative} scaling---its performance decreases with larger $N$---indicating that it learns a reward function that is inversely correlated with true preferences for many queries, a manifestation of the reward hacking phenomenon discussed in Section~\ref{sec:theory_2}.

\begin{figure}[t]
    \centering
    \includegraphics[width=0.45\linewidth]{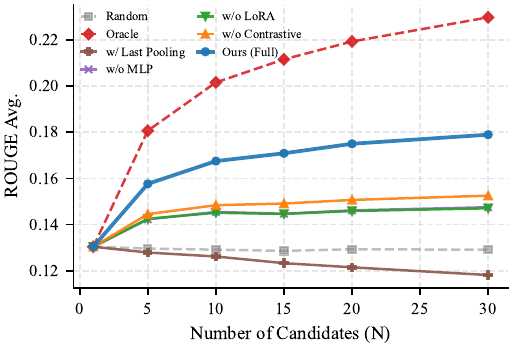}
    \caption{Ablation study: TTS performance vs number of candidates $N$ on LaMP-4. The full model (green) consistently outperforms all ablated variants. Last-token pooling (purple) performs worse than random selection, demonstrating catastrophic failure when holistic sequence understanding is lost.}
    \label{fig:ablation}
\end{figure}

\textbf{Pooling Strategy is Critical.} The most striking finding is that replacing mean pooling with last-token pooling causes catastrophic failure, with performance dropping \emph{below} the random baseline (12.46\% vs 13.58\%). This demonstrates that for reward modeling in personalization tasks, aggregating information across all tokens is essential. Last-token pooling, commonly used in decoder-only language models, fails to capture the holistic semantic relationship between query and candidate, which is crucial for accurate preference prediction.

\textbf{LoRA Adaptation Enables Personalization.} Removing the LoRA adapter results in a substantial 3.43\% drop in ROUGE-1, reducing oracle capture from 49.1\% to 17.3\%. This confirms that adapting the backbone representations to individual users is necessary for learning personalized preferences. Without LoRA, the model relies solely on the pre-trained representations, which cannot distinguish between different users' preference patterns.

\textbf{MLP Head Improves Expressiveness.} Replacing the MLP head with a simpler architecture causes a similar 3.36\% performance drop. The two-layer MLP with GELU activation provides the necessary nonlinearity to map pooled representations to accurate reward predictions. This is particularly important for modeling the complex, user-specific scoring functions required for personalization.

\textbf{Contrastive Learning Provides Additional Gains.} Removing the high-score contrastive loss results in a 2.83\% drop, the smallest among all ablations. While contrastive learning helps the model better distinguish between high-quality candidates, the primary gains come from the architectural choices (pooling, LoRA, MLP) rather than the auxiliary loss. Nevertheless, the contrastive objective provides meaningful improvements by encouraging the model to learn finer-grained preference distinctions.

%==============================================================================

%==============================================================================

\paragraph{Loss decomposition: NLL vs.\ contrastive.}
The architectural ablations in Appendix~\ref{app:ablation} fix the loss as NLL+contrastive and vary the architecture.
Here we hold the architecture fixed and vary the loss to decompose the contributions of NLL and the contrastive term.
Results on LaMP-4 are reported in Table~\ref{tab:nll_contrastive}.

\begin{table}[h]
\centering
\caption{Loss-decomposition ablation on LaMP-4 (Probabilistic User RM architecture, $r{=}8$).}
\label{tab:nll_contrastive}
\begin{tabular}{lccc}
\toprule
\textbf{Loss} & RM Pearson & ROUGE@$N{=}30$ & Gain over Random \\
\midrule
MSE only          & 0.790 & 0.139 & $+8.2\%$ \\
MSE + Contrastive & 0.790 & 0.139 & $+8.0\%$ \\
NLL only          & 0.916 & 0.156 & $+21.4\%$ \\
\textbf{NLL + Contrastive (full)} & \textbf{0.916} & \textbf{0.179} & $\mathbf{+39.3\%}$ \\
\bottomrule
\end{tabular}
\end{table}

Two findings emerge.
First, NLL is the primary contributor: replacing MSE with NLL alone closes most of the gap from $0.139$ to $0.156$ (Pearson lifts from $0.790$ to $0.916$), consistent with the prediction that NLL training prevents collapse via gradient buffering (Lemma~\ref{lem:gradient_buffering}).
Second, contrastive loss is complementary but only effective on top of NLL: the same contrastive term added to MSE produces no improvement, while added to NLL it lifts ROUGE from $0.156$ to $0.179$.
NLL provides the variance-aware foundation; contrastive refines ranking in the high-quality region critical for Best-of-$N$.

%==============================================================================

%==============================================================================

%==============================================================================

%==============================================================================

\subsection{General LLMs as Personalized Reward Models}
\label{app:synthesizeme}

To probe whether a strong general-purpose LLM with explicit user-persona context can replace per-user trained reward models, we evaluated a SynthesizeMe-style pairwise judge on LaMP-4: the judge LLM (\texttt{Qwen3-4B} with user-persona prompt) compares two candidates at a time, and a single-elimination knockout tournament selects the best of $N{=}30$ candidates per query ($29$ LLM calls per query).

\begin{table}[h]
\centering
\caption{Persona-conditioned LLM tournament vs.\ trained reward models on LaMP-4 (Best-of-$N{=}30$).}
\label{tab:synthesizeme}
\begin{tabular}{lcccc}
\toprule
\textbf{Method} & Pairwise acc.\ & ROUGE@$N{=}30$ & Gain & Cost/query \\
\midrule
Random                          & 0.500 & 0.155 & ---       & --- \\
LLM tournament (Qwen3-4B)       & \textbf{0.499} & 0.157 & $+1.3\%$ & $1.63$\,s \\
LLM pointwise (GPT-class)       & ---   & 0.144 & $-7.1\%$ & $30$ calls \\
\textbf{Probabilistic User RM}  & ---   & \textbf{0.179} & $\mathbf{+15.5\%}$ & $\mathbf{0.05}$\,s \\
\bottomrule
\end{tabular}
\end{table}

The LLM tournament's pairwise accuracy against ROUGE-determined preferences is $0.499$, indistinguishable from chance.
The corresponding ROUGE gain is negligible ($+1.3\%$).
The pointwise GPT-class judge performs even worse, with Pearson $r = 0.099$ to ROUGE (Appendix~\ref{app:alternative_metrics}).
Both findings indicate that general-purpose LLMs cannot capture per-user stylistic preferences from limited history.
The trained probabilistic reward model is also $33\times$ faster per query.
Our diagnostic framework predicts these failures: with $\rho \approx 0$ between LLM scores and true rewards, Proposition~\ref{prop:unified_law} yields flat scaling regardless of $N$.

%==============================================================================

%==============================================================================

%==============================================================================

%==============================================================================

\subsection{Sensitivity Analyses and Statistical Stability}
\label{app:sensitivity}

\paragraph{Sensitivity to LoRA rank.}
\label{app:lora_rank}
A natural question is whether reducing LoRA rank could mitigate Deterministic User RM's collapse and hacking by limiting overfitting capacity.
Table~\ref{tab:lora_rank} sweeps the LoRA rank $r \in \{2, 4, 8, 16\}$ for Deterministic User RM on LaMP-4 and contrasts with Probabilistic User RM at the same $r{=}8$ baseline.

\begin{table}[h]
\centering
\caption{LoRA rank sweep on Deterministic User RM (LaMP-4).}
\label{tab:lora_rank}
\begin{tabular}{lccc}
\toprule
\textbf{Variant} & RM Pearson & ROUGE@$N{=}30$ & Gain over Random \\
\midrule
Det.\ $r{=}2$  & 0.733 & 0.136 & $+6.0\%$ \\
Det.\ $r{=}4$  & 0.735 & 0.138 & $+7.4\%$ \\
Det.\ $r{=}8$  & 0.790 & 0.139 & $+8.2\%$ \\
Det.\ $r{=}16$ & 0.758 & 0.138 & $+7.1\%$ \\
\midrule
\textbf{Prob.\ $r{=}8$} & \textbf{0.916} & \textbf{0.179} & $\mathbf{+39.3\%}$ \\
\bottomrule
\end{tabular}
\end{table}

Reducing rank does \emph{not} fix the underlying problem: even at $r{=}2$ (four-fold fewer parameters than $r{=}8$), Deterministic RM achieves only $+6.0\%$ over random, with $\alpha = 0.20$ and $\beta = 0.32$ unchanged.
At the same $r{=}8$, Probabilistic RM achieves $+39.3\%$.
The result indicates that the limitation of Deterministic User RM is not a capacity-driven overfitting issue but a property of the training dynamics under MSE loss with low-variance labels (Lemma~\ref{lem:gradient_buffering}).

%==============================================================================

%==============================================================================

\paragraph{Sensitivity to per-user data volume.}
\label{app:data_sensitivity}
We test whether TTP requires extensive per-user data by downsampling each user's reward-model training set on LaMP-4 to $\{10\%, 25\%, 50\%, 100\%\}$, leaving the validation split unchanged.
Table~\ref{tab:data_sensitivity} reports the resulting Pearson correlation of the trained Probabilistic User RM against ROUGE and the Best-of-$N$ ROUGE at several values of $N$.

\begin{table}[h]
\centering
\caption{Probabilistic User RM trained on a fraction of each user's history, evaluated on a fixed LaMP-4 validation set. The Random baseline is approximately $0.128$ across all $N$.}
\label{tab:data_sensitivity}
\begin{tabular}{lccccc}
\toprule
\textbf{Data \%} & RM Pearson & $N{=}5$ & $N{=}10$ & $N{=}15$ & $N{=}30$ \\
\midrule
10\%   & 0.671 & 0.134 & 0.133 & 0.133 & 0.130 \\
25\%   & 0.781 & 0.134 & 0.137 & 0.137 & 0.136 \\
50\%   & 0.887 & 0.138 & 0.143 & 0.145 & 0.141 \\
100\%  & 0.916 & ---   & ---   & ---   & \textbf{0.179} \\
\bottomrule
\end{tabular}
\end{table}

TTP delivers gains over Random even at $10\%$ data ($+3.9\%$ at $N{=}15$), with super-linear scaling in data volume: doubling from $25\%$ to $50\%$ nearly doubles the ROUGE gain over Random ($7.0\% \to 13.4\%$ at $N{=}15$).
This pattern is consistent with our predictive law: more training data raises $\bar{\rho}_{+}$ and reduces $\alpha$, both of which lift $\rho_{\text{eff}}$ and hence the scaling slope.

%==============================================================================

%==============================================================================

\paragraph{Statistical stability across repetitions.}
\label{app:error_bars}
The main results of Section~\ref{sec:experiments_main} are averages over $10$ independent repetitions, each sampling $N$ candidates from a pre-generated pool of size $40$.
Figure~\ref{fig:error_bars_lamp4} and Table~\ref{tab:error_bars} report mean$\pm$std at representative $N$ on LaMP-4 (RAG policy).
All standard deviations are below $0.007$, far smaller than the inter-method differences; pairwise differences between Probabilistic and the other methods are significant at $p < 0.01$ across all $N \geq 5$.
The selector ranking is consistent across all repetitions and all $N$, confirming the stability of the conclusions reported in the main text.

\begin{figure}[h]
    \centering
    \includegraphics[width=0.55\textwidth]{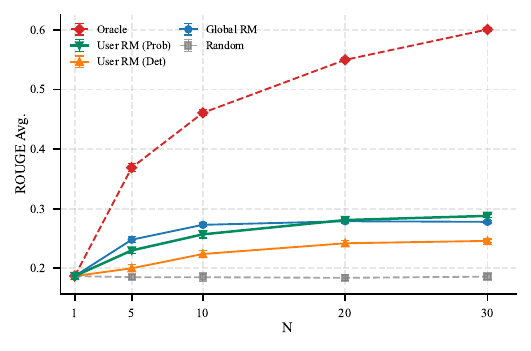}
    \caption{Best-of-$N$ ROUGE on LaMP-4 (RAG policy) with $1\sigma$ error bars over $10$ repetitions. Error bars are smaller than the marker for most points, indicating tight stability across runs.}
    \label{fig:error_bars_lamp4}
\end{figure}

\begin{table}[h]
\centering
\caption{Mean$\pm$std over $10$ repetitions on LaMP-4 (RAG policy, candidate pool size $40$). Absolute values use a different pool size from Section~\ref{sec:experiments_main}; the relative ranking and scaling pattern are identical.}
\label{tab:error_bars}
\small
\begin{tabular}{lccccc}
\toprule
$N$ & Random & Global & Det.\ User & \textbf{Prob.\ User} & Oracle \\
\midrule
1  & $0.187\pm0.005$ & $0.187\pm0.005$ & $0.187\pm0.005$ & $0.187\pm0.005$ & $0.187\pm0.005$ \\
5  & $0.185\pm0.004$ & $0.248\pm0.005$ & $0.200\pm0.006$ & $0.230\pm0.005$ & $0.369\pm0.007$ \\
10 & $0.185\pm0.007$ & $0.273\pm0.003$ & $0.224\pm0.006$ & $0.257\pm0.006$ & $0.461\pm0.005$ \\
20 & $0.184\pm0.005$ & $0.279\pm0.004$ & $0.242\pm0.005$ & $0.281\pm0.004$ & $0.550\pm0.003$ \\
30 & $0.186\pm0.005$ & $0.278\pm0.002$ & $0.246\pm0.003$ & $\mathbf{0.288\pm0.003}$ & $0.601\pm0.003$ \\
\bottomrule
\end{tabular}
\end{table}

%==============================================================================

%==============================================================================

%==============================================================================

%==============================================================================

\subsection{Variance-based Selection Strategies}
\label{app:variance_selection}

A natural question is whether the predicted variance from probabilistic User RM can improve candidate selection at inference time. We investigate several variance-based selection strategies and find that none outperform simple mean-based selection.

\paragraph{Motivation.}
The probabilistic User RM predicts both mean $\mu(x,y)$ and variance $\sigma^2(x,y)$ for each candidate. Intuitively, the variance captures the model's uncertainty about its prediction. One might expect that incorporating uncertainty could lead to better selection decisions, similar to exploration-exploitation trade-offs in bandit problems.

\paragraph{Strategies Evaluated.}
We evaluate the following variance-based selection strategies:

\begin{itemize}
    \item \textbf{Mean Only}: Select $\arg\max_y \mu(x,y)$ (baseline).
    \item \textbf{LCB-$\beta$} (Lower Confidence Bound): Select $\arg\max_y [\mu(x,y) - \beta \cdot \sigma(x,y)]$, preferring candidates with high mean and low uncertainty.
    \item \textbf{UCB-$\beta$} (Upper Confidence Bound): Select $\arg\max_y [\mu(x,y) + \beta \cdot \sigma(x,y)]$, allowing exploration of uncertain candidates.
    \item \textbf{Var-Filter-$p$}: Filter out candidates in the top $p$\% variance, then select by mean among remaining candidates.
    \item \textbf{SNR} (Signal-to-Noise Ratio): Select $\arg\max_y [\mu(x,y) / \sigma(x,y)]$.
\end{itemize}

\paragraph{Results.}
Table~\ref{tab:variance_strategies} shows the results on LaMP-4. All variance-based strategies perform worse than or equal to mean-only selection.

\begin{table}[h]
\centering
\caption{Comparison of variance-based selection strategies on LaMP-4. All strategies underperform simple mean-based selection, indicating that variance does not provide useful signal for candidate selection at inference time.}
\label{tab:variance_strategies}
\resizebox{0.65\textwidth}{!}{
\begin{tabular}{lcccccc}
\toprule
\textbf{Strategy} & $N$=1 & $N$=5 & $N$=10 & $N$=15 & $N$=20 & $N$=30 \\
\midrule
Oracle & .129 & .181 & .201 & .212 & .219 & .230 \\
\midrule
Mean Only & .129 & \textbf{.158} & \textbf{.166} & \textbf{.172} & \textbf{.174} & \textbf{.179} \\
\midrule
LCB-0.5 & .129 & \textbf{.158} & \textbf{.166} & .171 & .173 & .178 \\
LCB-1.0 & .129 & .156 & .165 & .169 & .171 & .175 \\
LCB-2.0 & .129 & .154 & .161 & .164 & .166 & .169 \\
\midrule
UCB-0.5 & .129 & \textbf{.158} & \textbf{.166} & \textbf{.172} & \textbf{.174} & \textbf{.179} \\
UCB-1.0 & .129 & .157 & .165 & .170 & .172 & .178 \\
UCB-2.0 & .129 & .155 & .162 & .167 & .169 & .174 \\
\midrule
Var-Filter-10\% & .129 & .152 & .163 & .166 & .169 & .173 \\
Var-Filter-20\% & .129 & .152 & .160 & .164 & .167 & .171 \\
Var-Filter-30\% & .129 & .146 & .157 & .160 & .164 & .168 \\
\midrule
SNR & .129 & .150 & .155 & .156 & .156 & .157 \\
\bottomrule
\end{tabular}}
\end{table}

\paragraph{Discussion.}
The failure of variance-based strategies suggests that while variance prediction is beneficial during training (as part of the NLL loss), the learned variance lacks proper calibration for use at inference time. Specifically:

\begin{itemize}
    \item \textbf{LCB strategies} assume that low variance indicates reliable predictions, but this relationship may not hold if variance is not well-calibrated.
    \item \textbf{UCB strategies} assume high variance candidates have potential upside, but without proper calibration, high variance may simply indicate poor predictions.
    \item \textbf{Variance filtering} removes potentially good candidates that happen to have high predicted variance.
    \item \textbf{SNR} is particularly sensitive to variance scale, performing worst among all strategies.
\end{itemize}

As our probabilistic User RM is not trained with variance-aware ranking objectives, there is no guarantee that predicted uncertainty aligns with selection quality. 
For high-quality candidates (i.e., those with large $\mu(x,y)$), the variance is often already low or relatively uniform. In this regime, ranking is dominated by differences in the mean, leaving little room for variance-aware strategies to meaningfully reorder candidates. This may explain why LCB/UCB with small $\beta$ values closely match mean-only selection, while larger $\beta$ values increasingly hurt performance.

%============================================================================

%============================================================================

%==============================================================================

%==============================================================================

\subsection{Computational Efficiency Analysis}
\label{app:compute-efficiency}

We measure the computational costs of training-based personalization and TTP to understand their efficiency characteristics.

\paragraph{Experimental Setup.}
We benchmark both approaches on LaMP-4 using an NVIDIA H100 80GB GPU. For training-based personalization, we fine-tune per-user Qwen3-4B policy models with LoRA adapters (rank 16, sequence length 256). For TTP, we train per-user probabilistic reward models based on Qwen2.5-1.5B (sequence length 128). We measure per-sample training time and per-query inference time across 10 users. Table~\ref{tab:unit-cost} reports the measured unit costs.

\begin{table}[h]
\centering
\caption{Measured computational costs for training and inference.}
\label{tab:unit-cost}
\begin{tabular}{lccc}
\toprule
\textbf{Component} & \textbf{Model} & \textbf{Unit Cost} \\
\midrule
Policy Training & Qwen3-4B & 59 ms/sample \\
RM Training & Qwen2.5-1.5B & 19 ms/sample \\
\midrule
Policy Inference (generation) & Qwen3-4B & 1498 ms/query \\
RM Inference (scoring) & Qwen2.5-1.5B & 2.3 ms/candidate \\
\bottomrule
\end{tabular}
\end{table}

\paragraph{Key Findings.}
Our measurements reveal two important characteristics:

\begin{enumerate}
    \item \textbf{Training efficiency}: The reward model trains $3.1\times$ faster per sample than the policy model (19 ms vs.\ 59 ms), owing to the smaller backbone (1.5B vs.\ 4B parameters) and shorter sequence length.
    
    \item \textbf{Inference cost structure}: The RM scoring cost is negligible compared to generation cost. Scoring a single candidate takes only 2.3 ms, while generating one response takes 1498 ms—a $650\times$ difference. Even with $N=30$ candidates, the total scoring time (69 ms) remains less than 5\% of a single generation.
\end{enumerate}

These findings suggest that the computational bottleneck of TTP lies entirely in candidate generation, not in reward model scoring. In deployment scenarios where candidates can be pre-generated, cached, or shared across users, TTP's inference overhead reduces to just the scoring component, making it highly efficient for high-throughput personalization.

%==============================================================================

%==============================================================================

\section{Theoretical Validation}
\label{app:theory_validation}

\subsection{Quantitative Validation of the Predictive Scaling Law}
\label{app:scaling_law_validation}

We provide the per-task evidence supporting the aggregate result reported in Section~\ref{sec:experiments_main}.
For each task, we calibrate the scale parameter $c\bar{\sigma}$ once from the Oracle scaling curve, measure $(\alpha, \beta, \bar{\rho}_{+}, \bar{\rho}_{-})$ from each reward model on the validation split, and substitute the values into Proposition~\ref{prop:unified_law} to predict the full Best-of-$N$ curve.
Table~\ref{tab:e1_per_task} reports the four diagnostic quantities, the effective correlation $\rho_{\text{eff}} = (1-\beta)\bar{\rho}_{+} - \beta|\bar{\rho}_{-}|$, and the relative mean absolute error (relMAE) between predicted and observed ROUGE for $N \in \{1, 5, 10, 15, 20, 30\}$.

\begin{table}[h]
\centering
\caption{Per-task validation of Proposition~\ref{prop:unified_law}. The four diagnostic quantities are measured directly from each reward model; relMAE compares predicted vs.\ observed Best-of-$N$ ROUGE.}
\label{tab:e1_per_task}
\small
\begin{tabular}{llcccccc}
\toprule
\textbf{Task} & \textbf{RM} & $\alpha$ & $\beta$ & $\bar{\rho}_{+}$ & $\bar{\rho}_{-}$ & $\rho_{\text{eff}}$ & \textbf{relMAE} \\
\midrule
LaMP-4    & Global & 0.33 & 0.45 & 0.222 & $-$0.214 & 0.017 & 8.8\% \\
          & Det.   & 0.20 & 0.33 & 0.273 & $-$0.202 & 0.093 & 1.4\% \\
          & Prob.  & 0.00 & 0.08 & 0.523 & $-$0.168 & 0.467 & 2.8\% \\
\midrule
LaMP-5    & Global & 0.17 & 0.46 & 0.300 & $-$0.261 & 0.035 & 7.4\% \\
          & Det.   & 0.07 & 0.37 & 0.312 & $-$0.266 & 0.090 & 1.8\% \\
          & Prob.  & 0.00 & 0.22 & 0.458 & $-$0.242 & 0.307 & 0.6\% \\
\midrule
Abstract  & Global & 0.00 & 0.37 & 0.323 & $-$0.254 & 0.110 & 0.4\% \\
          & Det.   & 0.00 & 0.17 & 0.496 & $-$0.295 & 0.363 & 1.6\% \\
          & Prob.  & 0.00 & 0.01 & 0.758 & $-$0.187 & 0.754 & 0.6\% \\
\midrule
Topic     & Global & 0.05 & 0.24 & 0.634 & $-$0.358 & 0.380 & 4.0\% \\
          & Det.   & 0.25 & 0.20 & 0.453 & $-$0.304 & 0.224 & 1.6\% \\
          & Prob.  & 0.00 & 0.01 & 0.787 & $-$0.223 & 0.772 & 1.1\% \\
\midrule
Product   & Global & 0.00 & 0.04 & 0.486 & $-$0.122 & 0.462 & 1.3\% \\
          & Det.   & 0.25 & 0.19 & 0.324 & $-$0.254 & 0.162 & 6.9\% \\
          & Prob.  & 0.00 & 0.00 & 0.757 & $-$0.213 & 0.757 & 3.4\% \\
\midrule
\textbf{Avg.} & Global & 0.11 & 0.31 & 0.39 & $-$0.24 & 0.20 & \textbf{4.4\%} \\
              & Det.   & 0.15 & 0.25 & 0.37 & $-$0.26 & 0.19 & \textbf{2.7\%} \\
              & Prob.  & \textbf{0.00} & \textbf{0.06} & \textbf{0.66} & $-$\textbf{0.21} & \textbf{0.61} & \textbf{1.7\%} \\
\bottomrule
\end{tabular}
\end{table}

Across all $15$ (task, RM) configurations, the formula achieves average relMAE of $2.9\%$ with $R^{2} > 0.9$ in $13$ of $15$ cases.
The two cases with $R^{2} < 0.9$ are LaMP-4 Global RM and LaMP-5 Global RM, where extreme reward hacking ($\rho_{\text{eff}} \approx 0$) yields nearly flat curves whose remaining variance is dominated by sampling noise rather than systematic scaling.

\begin{figure}[h]
    \centering
    % Source figure: rebuttal_e1_e10/outputs/scaling_law_validation.pdf
    \includegraphics[width=0.95\textwidth]{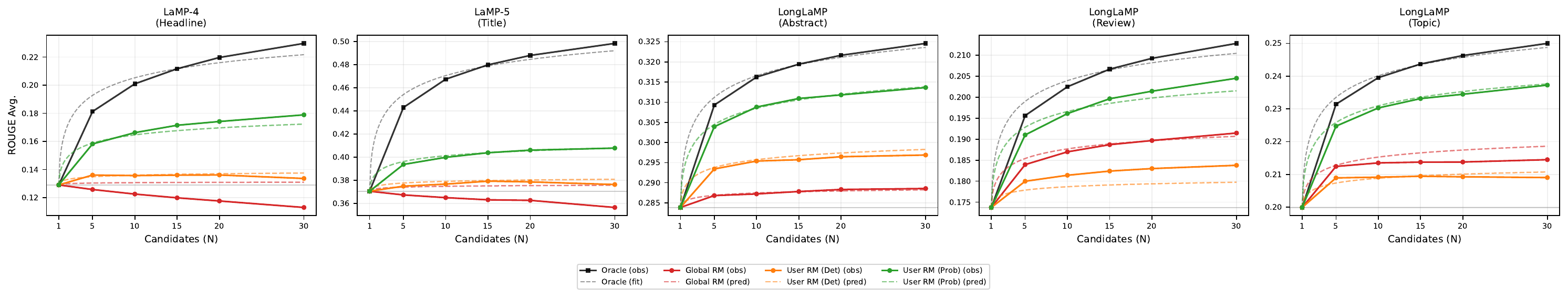}
    \caption{Predicted (dashed) vs.\ observed (solid) Best-of-$N$ ROUGE for all $15$ (task, RM) configurations. Predictions use the four diagnostic quantities from Table~\ref{tab:e1_per_task} substituted into Proposition~\ref{prop:unified_law}; the scale parameter $c\bar{\sigma}$ is calibrated once per task from Oracle and shared across all reward models.}
    \label{fig:scaling_law_validation}
\end{figure}

%==============================================================================

%==============================================================================

%==============================================================================

%==============================================================================

\subsection{Validating the Theoretical Assumptions}
\label{app:assumption_validation}

Our derivations rely on two analytical assumptions: (i) the per-user true-reward distribution is sub-Gaussian (used in Theorem~\ref{thm:oracle_scaling} and inherited by Lemma~\ref{lem:corr_scaling}), and (ii) the joint distribution of predicted and true rewards has approximately linear conditional expectation (used in the Bivariate-Gaussian-style derivation of Lemma~\ref{lem:corr_scaling}).
Below we empirically verify that both assumptions hold to a useful degree on our data.

\paragraph{Sub-Gaussian property of true rewards.}
For each user we fit the smallest sub-Gaussian parameter $\sigma$ that satisfies the moment-generating-function bound and compare the resulting tail behavior against the empirical reward distribution.
We illustrate this on LaMP-4 in Figure~\ref{fig:sub_gaussian_verification}: the empirical CDF lies close to the Gaussian envelope (small deviations only in the extreme tails), consistent with bounded-but-light-tailed ROUGE-derived rewards; the same pattern holds on the other four tasks.
Although the bound becomes loose at the very tail, this only inflates the constant in front of $\sqrt{\ln N}$ and does not change the scaling order.

\begin{figure}[h]
    \centering
    \includegraphics[width=0.95\textwidth]{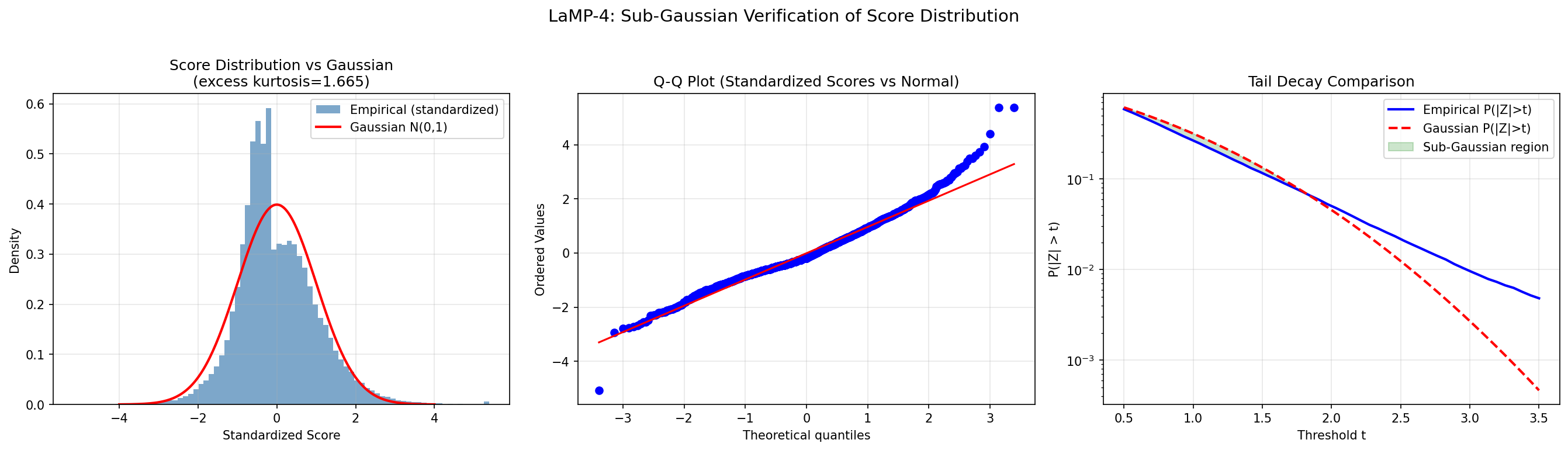}
    \caption{Sub-Gaussian property check on a representative LaMP-4 user. The empirical reward CDF (blue) is overlaid on the best-fitting Gaussian envelope (orange). Deviations are small except in the extreme tails, supporting the sub-Gaussian assumption used in Theorem~\ref{thm:oracle_scaling}.}
    \label{fig:sub_gaussian_verification}
\end{figure}

\paragraph{Linear conditional expectation of true reward given predicted reward.}
Lemma~\ref{lem:corr_scaling} relies on $\mathbb{E}[r^{*} \mid \hat{r}]$ being approximately linear in $\hat{r}$ (this is exact under joint Normality and weakly required under any second-order linear regression structure).
We illustrate this on LaMP-4 in Figure~\ref{fig:linear_conditional_expectation}, binning predictions $\hat{r}$ and plotting the conditional mean of $r^{*}$ against the bin centre for each of the three reward-model variants.
The binned conditional means lie close to a straight line whose slope matches the empirical correlation $\rho$ used in our analysis.
Departures from linearity are small and concentrated at the rare extremes, consistent with the remark in Appendix~\ref{app:proof_correlation} that high-score regions slightly under-perform the global correlation.

\begin{figure}[h]
    \centering
    \includegraphics[width=0.95\textwidth]{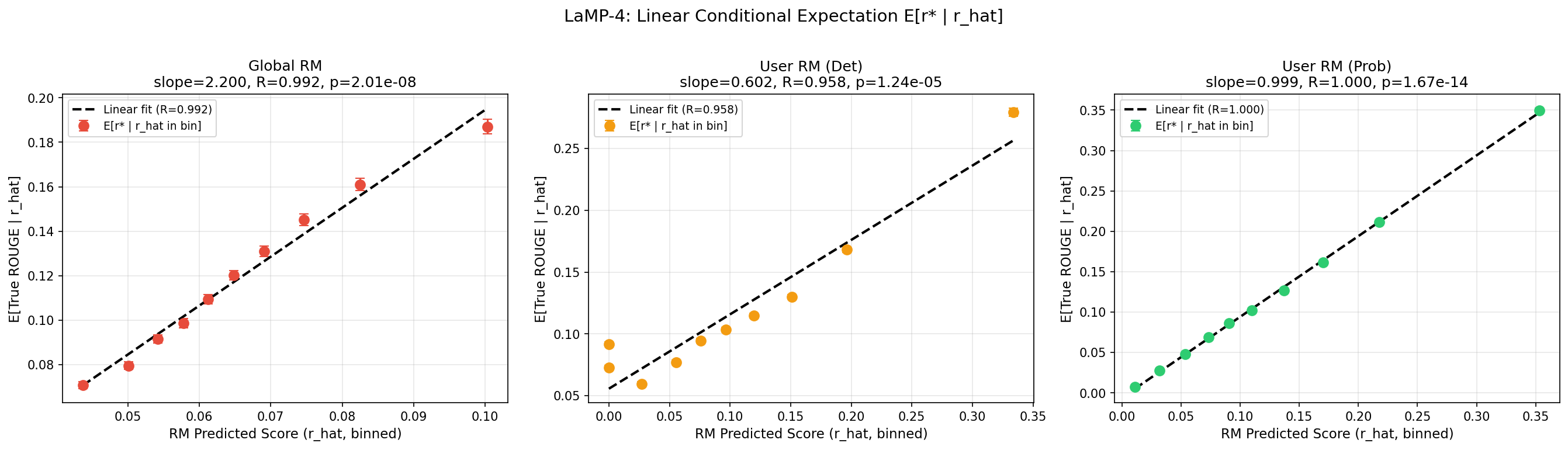}
    \caption{Bivariate linearity check on LaMP-4: empirical $\mathbb{E}[r^{*} \mid \hat{r}]$ vs.\ predicted reward $\hat{r}$, computed by binning the predictions of each reward model (Global, Det, Prob). The binned conditional means track a straight line whose slope matches the empirical correlation $\rho$, supporting the linear-conditional-expectation assumption in Lemma~\ref{lem:corr_scaling}.}
    \label{fig:linear_conditional_expectation}
\end{figure}

Together, these checks ensure that the formal derivations in Appendices~\ref{app:proof_oracle} and \ref{app:proof_correlation} are not artefacts of overly idealised assumptions.

%==============================================================================

%==============================================================================

\subsection{Detailed Failure-Mode Analysis}
\label{app:failure_mode_multitask}

\paragraph{Cross-task user-level collapse.}
\label{app:user-collapse-multitask}
\begin{figure*}[t]
    \centering
    \includegraphics[width=\textwidth]{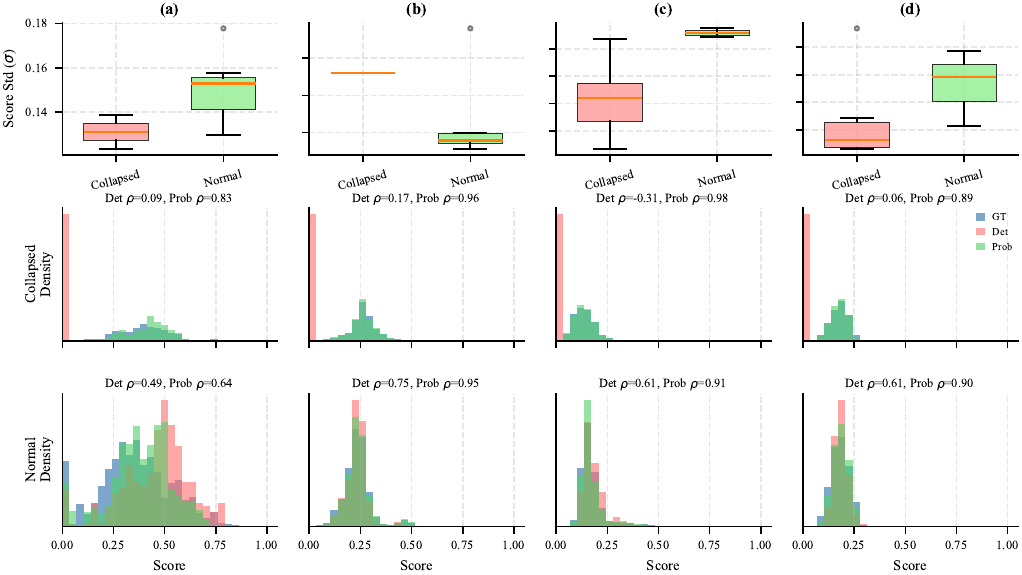}
        \caption{User-level collapse analysis across four tasks: (a) LaMP-5, (b) LongLaMP-Abstract, (c) LongLaMP-Product Review, (d) LongLaMP-Topic Writing. Top row: score standard deviation separating collapsed vs normal users. Middle/Bottom rows: score distributions for collapsed and normal users respectively, with Pearson correlations for deterministic (Det) and probabilistic (Prob) RMs.}
    \label{fig:user-analysis-multitask}
\end{figure*}

Figure~\ref{fig:user-analysis-multitask} extends our user-level collapse analysis to four additional tasks. The results consistently support our theoretical findings across diverse personalization scenarios.

\textbf{Collapse Identification.} Across all tasks, we observe a clear separation in score standard deviation ($\sigma$) between collapsed and normal users. LongLaMP-Product Review (c) exhibits the most pronounced separation, with collapsed users showing extremely low variance ($\sigma \approx 0.13$) compared to normal users ($\sigma \approx 0.17$). This aligns with the task's nature where deterministic models tend to predict similar review scores regardless of product differences.

\textbf{Deterministic RM Failure on Collapsed Users.} The middle row demonstrates that deterministic RMs consistently fail on collapsed users, producing near-degenerate predictions concentrated at a single value. This is most severe in LongLaMP-Product Review (c), where the deterministic RM achieves a \emph{negative} correlation ($\rho = -0.31$) on collapsed users, indicating predictions that are worse than random. In contrast, the probabilistic RM maintains strong correlations even for collapsed users: $\rho = 0.83$ (LaMP-5), $\rho = 0.96$ (Abstract), $\rho = 0.98$ (Product Review), and $\rho = 0.89$ (Topic Writing).

\textbf{Robust Performance on Normal Users.} For normal users (bottom row), both RMs perform reasonably well, though the probabilistic RM consistently achieves higher correlations. Notably, LongLaMP-Abstract (b) shows the smallest gap between methods on normal users ($\rho = 0.75$ vs $0.95$), suggesting that academic writing styles may be more predictable, reducing the advantage of uncertainty modeling for well-behaved users.

%==============================================================================

%==============================================================================

\paragraph{Cross-task query-level reward hacking.}
\label{app:query-hacking-multitask}
\begin{figure*}[t]
    \centering
    \includegraphics[width=\textwidth]{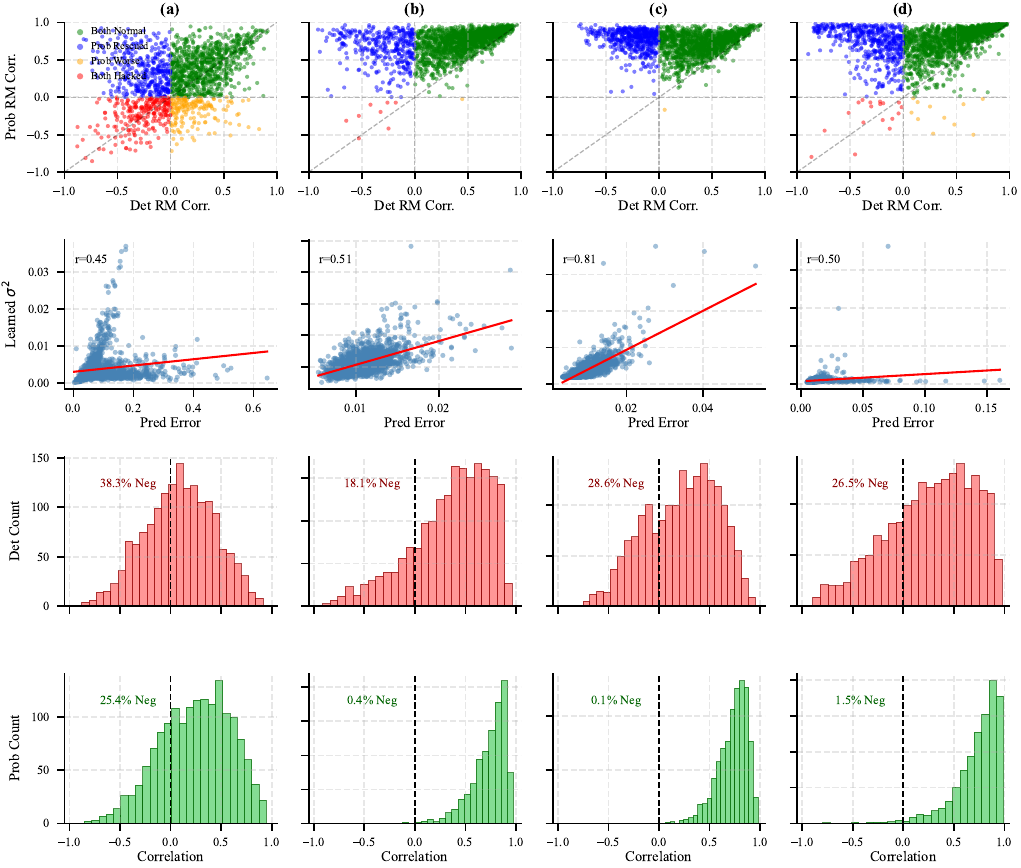}
    \caption{Query-level hacking analysis across four tasks: (a) LaMP-5, (b) LongLaMP-Abstract, (c) LongLaMP-Product Review, (d) LongLaMP-Topic Writing. Row 1: Per-query correlation comparison between Det and Prob RMs. Row 2: Learned variance vs prediction error. Rows 3--4: Distribution of per-query correlations for Det and Prob RMs, with percentage of negative correlations indicated.}
    \label{fig:query-analysis-multitask}
\end{figure*}

Figure~\ref{fig:query-analysis-multitask} presents query-level analysis across four additional tasks, validating the generality of our uncertainty-aware selection mechanism.

\textbf{Correlation Between Uncertainty and Error.} The second row shows strong positive correlations between learned variance $\sigma^2$ and prediction error across all tasks. LongLaMP-Product Review (c) achieves the highest correlation ($r = 0.81$), indicating that the probabilistic RM learns particularly well-calibrated uncertainty estimates for this task. The other tasks show moderate correlations ($r = 0.45$--$0.51$), confirming that the model successfully learns to express higher uncertainty for difficult queries.

\textbf{Dramatic Reduction in Negative Correlations.} The bottom two rows compare the distribution of per-query correlations. The deterministic RM exhibits substantial negative correlation rates: 38.3\% (LaMP-5), 18.1\% (Abstract), 28.6\% (Product Review), and 26.5\% (Topic Writing). These represent queries where the RM's ranking is \emph{inversely} related to true quality, a severe failure mode for Best-of-N selection.

The probabilistic RM dramatically reduces these failure cases: negative correlations drop to 25.4\% (LaMP-5), 0.4\% (Abstract), 0.1\% (Product Review), and 1.5\% (Topic Writing). The improvement is most striking for the LongLaMP tasks, where negative correlations are nearly eliminated ($<2\%$).

\textbf{Task-Specific Observations.} LaMP-5 (a) shows the smallest improvement (38.3\% $\to$ 25.4\%), likely because title generation from abstracts has inherent ambiguity that even uncertainty-aware methods cannot fully resolve. In contrast, LongLaMP tasks benefit more substantially, possibly because longer outputs provide richer signals for uncertainty estimation.

\textbf{Rescue Analysis.} The scatter plots (top row) reveal that the probabilistic RM ``rescues'' many queries from the hacking region (blue points above the diagonal), while rarely degrading performance (orange points below diagonal are sparse). This asymmetry confirms that uncertainty-aware selection provides a safety mechanism against reward hacking without sacrificing performance on well-predicted queries.

%==============================================================================

%==============================================================================

\paragraph{A priori predictability.}
The analyses in Appendices~\ref{app:user-collapse-multitask} and~\ref{app:query-hacking-multitask} document \emph{which} users collapse and \emph{which} queries get hacked.
This subsection asks the stronger question: can these failures be predicted \emph{a priori} from training-data statistics, before running any Best-of-$N$ experiment?

\paragraph{Predicting user-level collapse.}
Users with low ground-truth ROUGE variance are predicted to collapse more often, since narrow label distributions provide weak discriminative signal under MSE training (Lemma~\ref{lem:gradient_buffering}).
Table~\ref{tab:collapse_predictability} verifies this on all five tasks: low-variance users collapse $1\times$--$13\times$ more often than high-variance users under Deterministic RM, while Probabilistic RM eliminates collapse entirely on every task regardless of label variance.

\begin{table}[h]
\centering
\caption{User-level collapse rates (fraction of users with $\rho_{u} < 0.1$), split by ground-truth ROUGE variance (low vs.\ high std halves). Probabilistic User RM achieves $0\%$ collapse on every task.}
\label{tab:collapse_predictability}
\begin{tabular}{lcccccc}
\toprule
& \multicolumn{2}{c}{Det.\ User RM} & \multicolumn{2}{c}{\textbf{Prob.\ User RM}} & Det.\ ratio \\
\cmidrule(lr){2-3} \cmidrule(lr){4-5}
\textbf{Task} & Low std & High std & Low std & High std & (low/high) \\
\midrule
LaMP-4   & 33.3\% & 6.7\%  & 0\% & 0\% & $5\times$ \\
LaMP-5   & 13.3\% & 0.0\%  & 0\% & 0\% & $>13\times$ \\
Abstract & 0.0\%  & 0.0\%  & 0\% & 0\% & --- \\
Topic    & 25.0\% & 25.0\% & 0\% & 0\% & $1\times$ \\
Product  & 30.0\% & 10.0\% & 0\% & 0\% & $3\times$ \\
\bottomrule
\end{tabular}
\end{table}

\paragraph{Predicting query-level hacking.}
Queries with low candidate-quality spread are predicted to hack more often, since limited training signal drives the model toward spurious patterns (Lemma~\ref{lem:implicit_regularization}).
Table~\ref{tab:hacking_predictability} verifies this prediction.
Across the population of queries, Probabilistic RM reduces hacking $4$--$8\times$ relative to Global RM, with the largest gains precisely on the predicted high-risk queries (low candidate-quality spread).

\begin{table}[h]
\centering
\caption{Query-level hacking rates (fraction of queries with $\rho_{q} < 0$), split by candidate-quality spread (low vs.\ high std halves). Selected (task, RM) pairs.}
\label{tab:hacking_predictability}
\begin{tabular}{llccc}
\toprule
\textbf{Task} & \textbf{RM} & Low spread & High spread & Reduction \\
\midrule
Topic     & Global & 43.2\% & 7.8\%  & $5.5\times$ \\
Product   & Det.   & 42.2\% & 19.8\% & $2.1\times$ \\
LaMP-4    & Det.   & 34.8\% & 31.4\% & $1.1\times$ \\
\midrule
Abstract  & Prob.  & 1.2\%  & 0.0\%  & --- \\
Product   & Prob.  & $<$0.1\% & $<$0.1\% & --- \\
LaMP-4    & Prob.  & 10.9\% & 5.2\%  & $2.1\times$ \\
\bottomrule
\end{tabular}
\end{table}

These two tables transform the diagnostic framework into an actionable tool: practitioners can identify high-risk users and queries by computing simple per-user and per-query label statistics on the training set, before any Best-of-$N$ experiment is run.

%==============================================================================

%==============================================================================

\section{Pipeline and Implementation Details}
\label{app:implementation}

Our experimental pipeline consists of three stages, as illustrated in Figure~\ref{fig:overview}: (1) \textbf{Personalized Prompting}, where we construct personalized policy models using retrieval-augmented generation; (2) \textbf{Reward Model Training}, where we train user-specific reward models to capture individual preferences; and (3) \textbf{Inference}, where we apply test-time personalization by sampling multiple candidates and selecting the best one via the reward model. We describe each stage in detail below.

\begin{figure*}[h]
    \centering
    \includegraphics[width=0.6\textwidth]{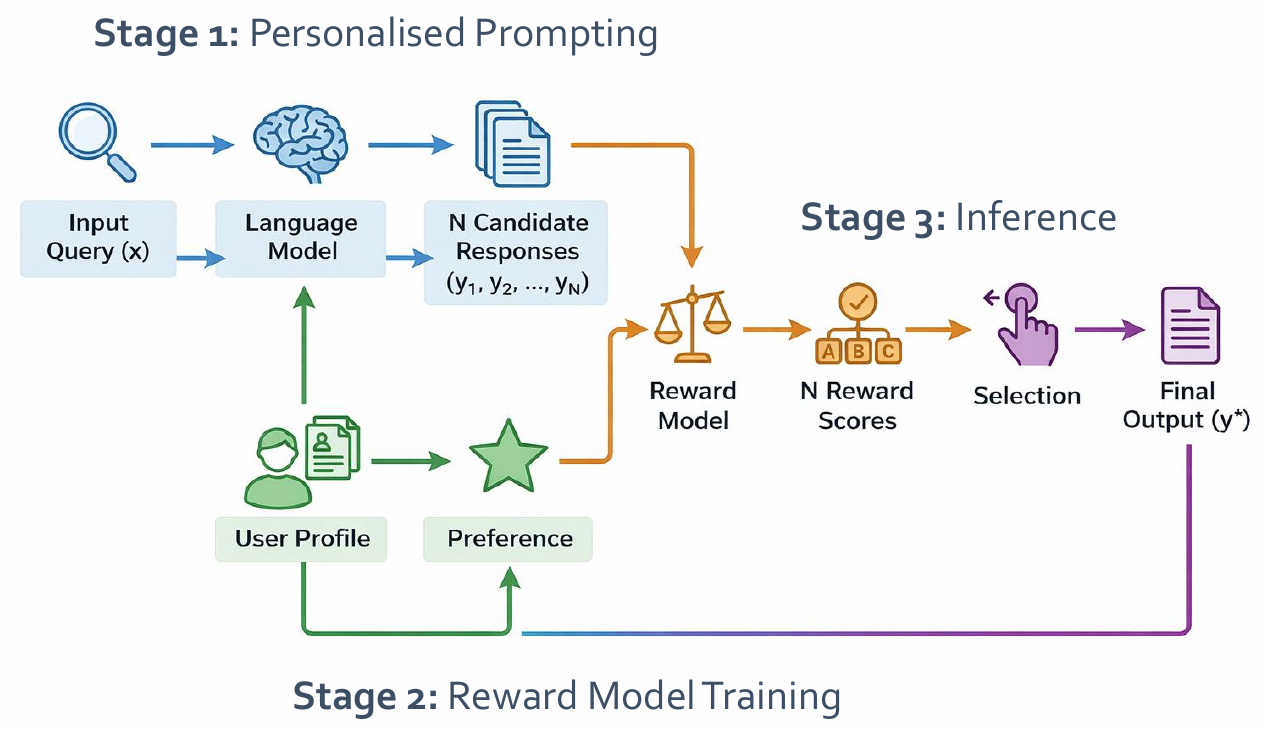}
    \caption{Overview of the experimental pipeline.}
    \label{fig:overview}
\end{figure*}

%==============================================================================

%==============================================================================

\subsection{Stage 1: Personalized Prompting}
\label{app:stage1}

The goal of personalized prompting is to generate candidate responses that serve as training data for reward models. This stage involves dataset preparation and policy model configuration.

\paragraph{Datasets.}
We evaluate on five personalized text generation tasks: \textbf{News Headline Generation} (LaMP-4) and \textbf{Scholarly Title Generation} (LaMP-5) from LaMP~\citep{salemi-etal-2024-lamp}, and \textbf{Abstract Generation}, \textbf{Product Review Writing}, and \textbf{Topic Writing} from LongLaMP~\citep{kumar2024longlamp}. We exclude email-related tasks from both benchmarks as they require additional access to private email datasets. For all tasks, we adopt the \emph{temporal-based} data split provided by the original benchmarks, which ensures that test queries are chronologically after training examples to simulate realistic deployment scenarios.

Following \citet{tan-etal-2024-democratizing}, we select the top-30 users with the longest profiles for LaMP tasks and top-20 users for LongLaMP tasks. Representative input-output examples for each task are shown as follows.

\begin{tcolorbox}[title=\textbf{LaMP-4: News Headline Generation}]
\textbf{Input:} Generate a headline for the following article: According to AP, Harry's days as a pilot are over. The prince, who spent over three years training as an Apache helicopter...

\textbf{Output:} Prince Harry Scores A New Job, Says Goodbye To His Pilot Uniform
\end{tcolorbox}

\begin{tcolorbox}[title=\textbf{LaMP-5: Scholarly Title Generation}]
\textbf{Input:} Generate a title for the following abstract of a paper: We propose a novel evolutionary approach to the problem of timing-driven FPGA placement. The method used is evolutionary programming (EP) with incremental position encoded in the population...

\textbf{Output:} An evolutionary approach to timing driven FPGA placement
\end{tcolorbox}

\begin{tcolorbox}[title=\textbf{LongLaMP: Abstract Generation}]
\textbf{Input:} Generate an abstract for the title ``Learned Token Pruning for Transformers'' using the following items: 1. Transformer models 2. Learned Token Pruning (LTP) 3. Performance improvement 4. FLOPs Reduction 5. Robustness

\textbf{Output:} Efficient deployment of transformer models in practice is challenging due to their inference cost including memory footprint, latency, and power consumption, which scales quadratically with input sequence length. To address this, we present a novel token reduction method dubbed Learned Token Pruning (LTP)...
\end{tcolorbox}

\begin{tcolorbox}[title=\textbf{LongLaMP: Product Review Writing}]
\textbf{Input:} Generate the review text for a product with rating ``5.0'' and description ``Kate Aster is the bestselling author of the Special Ops: Homefront series...''. Summary: ``Contract with a SEAL''.

\textbf{Output:} It is impossible not to fall in love with the characters in this book, which brings in strong, supportive relationships with family and friends. Aster has done a remarkable job of making the plot and characters engaging...
\end{tcolorbox}

\begin{tcolorbox}[title=\textbf{LongLaMP: Topic Writing}]
\textbf{Input:} Generate the content for a reddit post titled ``good riddance to bad rubbish''.

\textbf{Output:} As an Eagle who worked very hard to encourage BSA to change it's policy, I have to say this is good news. This is symbolic of the total defeat of the Christianist/anti-gay lobby in Scouting...
\end{tcolorbox}

\paragraph{Policy Model Configuration.}
We use \texttt{Qwen3-4B-Instruct}~\citep{yang2025qwen3} as the policy model backbone. To achieve personalization, we adopt a retrieval-augmented generation (RAG) approach with dense retrieval~\cite{salemi-etal-2024-lamp}. Specifically, we use \texttt{all-MiniLM-L6-v2}~\cite{reimers-gurevych-2019-sentence} as the text encoder for embedding-based retrieval. Given an input query, we encode it and retrieve the top-$k$ most similar examples from the user's profile based on cosine similarity. The retrieved examples are concatenated as context in the prompt.

For decoding, we use \texttt{vLLM}~\citep{kwon2023vllm} to accelerate generation. For each query, we sample $N=30$ candidate responses to construct reward model training data. 
Task-specific prompt templates are provided in Appendix~\ref{app:policy_templates}.

\paragraph{Generation Hyperparameters.}
Table~\ref{tab:dataset_stats} summarizes the generation hyperparameters and dataset statistics for each task.

\begin{table}[h]
\centering
\caption{Dataset statistics and generation hyperparameters. Avg Len denotes the average candidate length in characters.}
\label{tab:dataset_stats}
\resizebox{\columnwidth}{!}{%
\begin{tabular}{lcccccccc}
\toprule
\textbf{Task} & \textbf{\# Users} & \textbf{\# Queries} & \textbf{Cands/Query} & \textbf{Avg Len} & \textbf{Temp} & \textbf{Top-$k$} & \textbf{Top-$p$} & \textbf{\# History} \\
\midrule
LaMP-4 (Headline) & 30 & 24015 & 24.9 & 74 & 1.7 & 50 & 0.90 & 5 \\
LaMP-5 (Title) & 30 & 12036 & 16.8 & 96 & 1.7 & 50 & 0.90 & 5 \\
LongLaMP (Abstract) & 20 & 22388 & 30.0 & 1227 & 1.5 & 40 & 0.90 & 3 \\
LongLaMP (Review) & 20 & 12580 & 30.0 & 514 & 1.5 & 40 & 0.90 & 3 \\
LongLaMP (Topic) & 20 & 7123 & 29.2 & 1101 & 1.5 & 40 & 0.90 & 3 \\
\bottomrule
\end{tabular}
}
\end{table}

We set the number of candidate generations to $N=30$ for all tasks. However, for short-form generation tasks (LaMP-4 and LaMP-5), the policy model occasionally produces duplicate outputs despite temperature sampling. We remove exact duplicates from the candidate set, resulting in an average of fewer than 30 unique candidates per query for these tasks. Long-form generation tasks (LongLaMP) exhibit minimal duplication due to their greater output diversity.

%==============================================================================

%==============================================================================

\subsection{Stage 2: Reward Model Training}
\label{app:stage2}

This stage describes the training procedure for three reward model variants: Global RM, Deterministic User RM, and Probabilistic User RM.

\paragraph{Model Architecture.}
All reward models share the same backbone architecture: \texttt{Qwen2.5-1.5B-Instruct}~\citep{qwen2025qwen25} with masked mean pooling over hidden states.
For probabilistic User RM, an additional variance head predicts $\log \sigma^2$ with output passed through softplus and clamped to $[10^{-4}, 0.5]$.

\paragraph{Reward Label Construction.}
For each candidate response, we compute the ROUGE score against the ground-truth output as the reward label: $r = (\text{ROUGE-1} + \text{ROUGE-L})/2$.
where ROUGE-1 measures unigram overlap and ROUGE-L measures longest common subsequence. Scores are clipped to $[0, 1]$.

\paragraph{Training Configurations.}
Table~\ref{tab:rm_training} summarizes the training hyperparameters for each reward model variant.

\begin{table}[h]
\centering
\caption{Reward model training configurations.}
\label{tab:rm_training}
\begin{tabular}{lccc}
\toprule
\textbf{Hyperparameter} & \textbf{Global RM} & \textbf{User RM (Det)} & \textbf{User RM (Prob)} \\
\midrule
Backbone & \multicolumn{3}{c}{Qwen2.5-1.5B-Instruct} \\
Backbone training & LoRA & LoRA & LoRA \\
Learning rate (head) & $10^{-4}$ & $10^{-4}$ & $2 \times 10^{-4}$ \\
Learning rate (LoRA) & $10^{-4}$ & $10^{-4}$ & $2 \times 10^{-5}$ \\
Batch size & 48 & 16 & 8 \\
Epochs & 8 & 10 & 15 \\
Warmup ratio & 0.05 & 0.1 & 0.1 \\
Early stopping & -- & -- & patience=5 \\
Weight decay & 0.01 & 0.01 & 0.01 \\
Max length (LaMP) & 128 & 128 & 128 \\
Max length (LongLaMP) & 832 & 832 & 832 \\
\bottomrule
\end{tabular}
\end{table}

\paragraph{LoRA Configuration.}
For models using LoRA fine-tuning, we apply low-rank adaptation with rank $r=8$, scaling factor $\alpha=16$, and dropout $0.05$. Target modules are \texttt{q\_proj} and \texttt{v\_proj}. This results in approximately 4.6M trainable parameters per user model.

\paragraph{Loss Functions.}
\textbf{Deterministic User RM} is trained with MSE loss. 
\textbf{Probabilistic User RM} is trained with Gaussian NLL loss (Equation~\ref{eq:nll_loss}).
Two variants of user RM are also trained with a contrastive term for high-score discrimination, which the loss weighted parameter is $\lambda = 1.0$.
The contrastive loss operates on sample pairs where both have ground-truth scores above 0.5, with margin $m = 0.02$. We also apply linear sample weighting with weights in $[0.3, 1.0]$ proportional to ground-truth scores, emphasizing high-quality samples. \textbf{Global RM} is trained with BCE loss with sample reweighting for extreme scores.

\paragraph{User Selection and Data Split.}
For LaMP tasks, we select the top-30 users with the most data, using up to 3,000 samples per user. For LongLaMP tasks, we select the top-20 users with up to 5,000 samples per user. All data is split 80\%/20\% for training/validation per user.

\paragraph{Implementation Details.}
All experiments are conducted on a single NVIDIA H100 PCIe GPU. We use BF16 mixed precision training with Flash Attention 2~\citep{dao2024flashattention} for efficiency. The optimizer is AdamW~\citep{loshchilov2018decoupled} for all models.
All RMs share the same prompt templates, which are provided in Appendix~\ref{app:reward_templates}.
No retrieval-based method is used for RM training.

%==============================================================================

%==============================================================================

\subsection{Stage 3: Inference}
\label{app:stage3}

This stage describes the Test-Time Personalization (TTP) inference procedure and evaluation protocol.

\paragraph{TTP Inference Procedure.}
Given a test query $x$ for user $u$, TTP operates as follows:
\begin{enumerate}
    \item Sample $N$ candidates from the pre-generated candidate pool (up to 30 candidates per query).
    \item Score each candidate using the user-specific reward model $R_u$.
    \item Select the candidate with the highest predicted reward: $\hat{y} = \arg\max_{y \in \mathcal{Y}_N} R_u(x, y)$.
\end{enumerate}
For probabilistic User RM, we use only the predicted mean $\mu(x, y)$ for selection.

\paragraph{Evaluation Protocol.}
Table~\ref{tab:inference_config} summarizes the inference configuration. For each (user, query, $N$) combination, we repeat the random candidate sampling 3 times and report the average to reduce variance from candidate selection.

\begin{table}[h]
\centering
\caption{Inference and evaluation configuration.}
\label{tab:inference_config}
\begin{tabular}{lc}
\toprule
\textbf{Parameter} & \textbf{Value} \\
\midrule
$N$ values tested & 1, 5, 10, 20, 30 \\
Queries per user & 100 \\
Random trials per $N$ & 3 \\
Inference batch size & 32 \\
\bottomrule
\end{tabular}
\end{table}

%==============================================================================

%==============================================================================

\section{Prompt Templates}

\subsection{Policy Model Templates}
\label{app:policy_templates}

We adopt a retrieval-augmented generation (RAG) approach for personalized generation. For each query, we retrieve relevant examples from the user's history and construct the prompt using task-specific templates. Below we present the prompt templates for each task.

\begin{tcolorbox}[colback=blue!3, colframe=blue!40, title=\textbf{LaMP-4: News Headline Generation},
width=\textwidth]
\begin{verbatim}
{history}

### Instruction: Generate a headline for the following article. Match 
the user's headline style.

### Article:
{input}

### Headline:
\end{verbatim}
\end{tcolorbox}

\begin{tcolorbox}[colback=blue!3, colframe=blue!40, title=\textbf{LaMP-5: Scholarly Title Generation}]
\begin{verbatim}
{history}

### Instruction: Generate a title for the following abstract. Match the 
user's academic titling style. Output ONLY the title.

### Abstract:
{input}

### Title:
\end{verbatim}
\end{tcolorbox}

\begin{tcolorbox}[colback=blue!3, colframe=blue!40, title=\textbf{LongLaMP: Abstract Generation}]
\begin{verbatim}
{history}

### Instruction: Write a scientific abstract for the following title,  
matching the user's academic writing style found in the history.

### Title:
{input}

### Abstract:
\end{verbatim}
\end{tcolorbox}

\begin{tcolorbox}[colback=blue!3, colframe=blue!40, title=\textbf{LongLaMP: Product Review Writing}]
\begin{verbatim}
{history}

### Instruction:
You are the user described in the [User History] above. The User Request 
below specifies the product description, the rating you gave, and the 
summary you wrote. Write the full Review Text that corresponds to these 
details. Mimic your detailed, narrative writing style found in the 
history. Write a full body review (around 50-100 words), telling the 
story of your experience.

### User Request:
{input}

### Review Content:
\end{verbatim}
\end{tcolorbox}

\begin{tcolorbox}[colback=blue!3, colframe=blue!40, title=\textbf{LongLaMP: Topic Writing}]
\begin{verbatim}
{history}

### Instruction:
You are the user described in the [User History] above. Write a Reddit 
post about the topic below. Mimic the user's vocabulary and tone, but 
keep the response relatively short and punchy (around 100-200 words). 
Do not write a wall of text.

### Topic:
{input}

### Post Content:
\end{verbatim}
\end{tcolorbox}

In all templates, \texttt{\{history\}} is replaced with retrieved user history examples, and \texttt{\{input\}} is replaced with the current query. The model generates text following the final prompt marker (e.g., \texttt{\#\#\# Headline:}).

\subsection{Reward Model Prompt}
\label{app:reward_templates}

The reward model receives a structured prompt consisting of a task-specific instruction, the input query, and the candidate response:

\begin{tcolorbox}[colback=green!5, colframe=green!50!black, title=\textbf{Reward Model Prompt Template}]
\begin{verbatim}
[Instruction]: {task_instruction}
[Input]: {query}
[Document]: {candidate}
\end{verbatim}
\end{tcolorbox}

\noindent where \texttt{\{task\_instruction\}} is one of the following task-specific instructions:

\begin{tcolorbox}[colback=green!5, colframe=green!50!black, title=\textbf{LaMP-4: News Headline Generation}]
\small\texttt{Judge how good the Document is as a headline for the Article. Higher score = better.}
\end{tcolorbox}

\begin{tcolorbox}[colback=green!5, colframe=green!50!black, title=\textbf{LaMP-5: Scholarly Title Generation}]
\small\texttt{Judge how good the Document is as a title for the Abstract. Higher score = better.}
\end{tcolorbox}

\begin{tcolorbox}[colback=green!5, colframe=green!50!black, title=\textbf{LongLaMP: Abstract Generation}]
\small\texttt{Judge how good the Document is as an abstract for the Title. Higher score = better.}
\end{tcolorbox}

\begin{tcolorbox}[colback=green!5, colframe=green!50!black, title=\textbf{LongLaMP: Product Review Writing}]
\small\texttt{Judge how good the Document is as a review for the Product. Higher score = better.}
\end{tcolorbox}

\begin{tcolorbox}[colback=green!5, colframe=green!50!black, title=\textbf{LongLaMP: Topic Writing}]
\small\texttt{Judge how good the Document is as a post for the Topic. Higher score = better.}
\end{tcolorbox}
%---- END sections/appendix.tex ----

\end{document}